\documentclass{article}


\usepackage[final,nonatbib]{neurips_2023}



\usepackage[utf8]{inputenc} 
\usepackage[T1]{fontenc}    
\usepackage{hyperref}       
\usepackage{url}            
\usepackage{booktabs}       
\usepackage{amsfonts}       
\usepackage{nicefrac}       
\usepackage{microtype}      
\usepackage{xcolor}         

\usepackage{amsmath,amsfonts}
\usepackage{array}
\usepackage[caption=false]{subfig}
\usepackage{textcomp}
\usepackage{stfloats}
\usepackage{url}
\usepackage{verbatim}
\usepackage{graphicx}
\usepackage{subfig}
\usepackage{pifont}
\usepackage{amsmath}
\usepackage{amssymb}
\usepackage{booktabs}
\usepackage{url}
\usepackage{multirow}
\usepackage{diagbox}
\usepackage{bbding}
\usepackage{amsthm}
\usepackage{amsfonts}
\usepackage{float}  
\usepackage{color}  
\usepackage{textcomp}
\usepackage{array}  
\usepackage{wrapfig}
\usepackage{cleveref}
\usepackage[ruled]{algorithm2e}

\newtheorem{definition}{Definition}
\newtheorem{lemma}{Lemma}
\usepackage{wrapfig}

\newcommand{\ie}{\textit{i}.\textit{e}., }
\newcommand{\eg}{\textit{e}.\textit{g}., }

\newcommand{\aka}{\textit{a}.\textit{k}.\textit{a} }

\title{Cross-modal Active Complementary Learning \\ with Self-refining Correspondence}

%


\author{
    Yang Qin\textsuperscript{\rm 1},\ \ Yuan Sun\textsuperscript{\rm 1},\ \ Dezhong Peng\textsuperscript{\rm 1,3,4},\ \ Joey Tianyi Zhou\textsuperscript{\rm 2},\\\textbf{Xi Peng\textsuperscript{\rm 1},\ \ \  \ \ \   Peng Hu\textsuperscript{\rm 1}\thanks{Corresponding author.}}\\
     \textsuperscript{\rm 1} College of Computer Science, Sichuan University, Chengdu, China.\\
    \textsuperscript{\rm 2} Centre for Frontier AI Research (CFAR) and Institute of High \\Performance Computing (IHPC), A*STAR, Singapore.\\
    \textsuperscript{\rm 3} Chengdu Ruibei Yingte Information Technology Co., Ltd, Chengdu, China.\\
    \textsuperscript{\rm 4} Sichuan Zhiqian Technology Co., Ltd, Chengdu, China.\\
    \texttt{\{qinyang.gm,joey.tianyi.zhou,pengx.gm,penghu.ml\}@gmail.com, }\\
    \texttt{sunyuan\_work@163.com, pengdz@scu.edu.cn}
}


\begin{document}

\maketitle

\begin{abstract}

Recently, image-text matching has attracted more and more attention from academia and industry, which is fundamental to understanding the latent correspondence across visual and textual modalities. However, most existing methods implicitly assume the training pairs are well-aligned while ignoring the ubiquitous annotation noise, \aka noisy correspondence (NC), thereby inevitably leading to a performance drop. Although some methods attempt to address such noise, they still face two challenging problems: excessive memorizing/overfitting and unreliable correction for NC, especially under high noise. To address the two problems, we propose a generalized Cross-modal Robust Complementary Learning framework (CRCL), which benefits from a novel Active Complementary Loss (ACL) and an efficient Self-refining Correspondence Correction (SCC) to improve the robustness of existing methods.   Specifically, ACL exploits active and complementary learning losses to reduce the risk of providing erroneous supervision, leading to theoretically and experimentally demonstrated robustness against NC. SCC utilizes multiple self-refining processes with momentum correction to enlarge the receptive field for correcting correspondences, thereby alleviating error accumulation and achieving accurate and stable corrections. We carry out extensive experiments on three image-text benchmarks, \ie Flickr30K, MS-COCO, and CC152K, to verify the superior robustness of our CRCL against synthetic and real-world noisy correspondences. Code is available at \url{https://github.com/QinYang79/CRCL}.

\end{abstract}
\section{Introduction}

Image-text matching aims to search the most relevant samples across different modalities, which is fundamental for most cross-modal tasks \cite{malinowski2015ask,vinyals2015show,xu2015show,diao2021similarity,jiang2023cross,9782584}. 
The core of image-text matching is how to accurately measure the similarity between distinct modalities, however, which is challenging due to the visual-textual discrepancy. To tackle the challenge, numerous deep methods are presented to learn the visual-semantic associations of image-text pairs and achieve remarkable progress, thanks to the powerful representation ability of Deep Neural Networks (DNNs) and some well-designed similarity inference architectures~\cite{faghri2017vse++,lee2018stacked,diao2021similarity,chen2021learning}. They could be roughly divided into two groups, \ie global-level methods \cite{faghri2017vse++,chen2021learning} and local-level methods \cite{lee2018stacked,diao2021similarity}, which aim at learning the image-to-sentence and region-to-word correlation to infer the cross-modal similarity, respectively. 
Although these methods achieved promising matching performance, most of them implicitly require large-scale well-aligned data for training, which is expensive or even impossible to collect due to ubiquitous noise in real-world scenarios \cite{sharma2018conceptual,jia2021scaling}.
Therefore, there is inevitably imperfect alignment luring in the data, \ie noisy correspondence (NC)~\cite{huang2021learning}, resulting in inferior performance.

To handle the NC problem, some prior arts are presented to alleviate the adverse impact of NC in various tasks, \eg partially view-aligned clustering~\cite{yang2022robust,yang2021partially,wen2023deep,yang2020adversarial}, video-text retrieval~\cite{zhang2023robust}, visible-infrared person re-identification~\cite{yang2022learning}, and image-text matching~\cite{huang2021learning,qin2022deep,hu2023cross}. Specifically, inspired by learning with noisy labels~\cite{li2020dividemix,han2018co}, some works~\cite{huang2021learning,yang2023bicro,han2023noisy} are proposed to alleviate the negative impact brought by NC. These works attempt to leverage the memorization effect of DNNs~\cite{arpit2017closer} to gradually distinguish the noisy image-text pairs for robust learning in a co-teaching manner. Furthermore, the predicted soft correspondence is used to recast a soft margin to replace the scalar margin of triplet ranking loss~\cite{faghri2017vse++}, which helps to avoid misleading the model by mismatched pairs.
However, the soft-margin ranking loss is experimentally found to provide only limited robustness against NC, especially under high noise (as shown in \Cref{tb1}), due to unstable division based on inaccurate predictions.
In contrast, some works~\cite{qin2022deep,hu2023cross} aim to enhance the robustness of cross-modal methods against NC by starting with a robust loss function, \ie avoiding over-amplification of wrong supervision information to reduce misleading risk. However, the lack of explicitly mitigating the effect of easily separable noise makes them hard to further improve the performance.

To address the aforementioned issues, we propose a generalized robust framework, dubbed CRCL, for learning with noisy correspondences. CRCL could be easily migrated into existing image-text matching methods to enhance their robustness against NC. Our framework introduces a novel Active Complementary Loss (ACL) that applies active and complementary learning to mutually boost robustness against NC. Specifically, we present a robust complementary learning loss that employs complementary pairs, such as ``input image-text pairs are irrelevant'', to conduct indirect cross-modal learning with exponential normalization.
Due to the low likelihood of selecting incorrect complementary pairs, robust complementary learning could reduce the risk of providing incorrect supervision and smooth the losses, thus embracing robustness against NC. However, the robust loss will face the underfitting problem, leading to suboptimal performance. To overcome this issue, a weighted Active learning loss is proposed to enforce the model focus on more reliable positive pairs in addition to only complementary pairs. In addition, we propose a Self-refining Correspondence Correction paradigm (SCC) to obtain stable and accurate correspondence correction. SCC utilizes Momentum Correction (MC) to aggregate historical predictions for stable and accurate correspondence corrections. By combining multiple Self-Refining processes (SR) throughout the entire training process, we alleviate over-memorization for NCs. In summary, the key contributions and innovations of this work are as follows:

\begin{itemize}
    \item We propose a generalized Cross-modal Robust Contrastive Learning framework (CRCL) to address a pressing and widely-exist problem in image-text matching, \ie noisy correspondence. CRCL empowers existing methods with strong robustness through the perspectives of robust loss and correction techniques.
    \item A novel Active Complementary Loss (ACL) is presented to balance active and complementary learning, mutually enhancing robustness against NC while encapsulating correct cross-modal associations in the latent common space.
    \item We design an effective Self-refining Correspondence Correction paradigm (SCC) to achieve accurate and stable soft correspondences, which enables the prediction-based corrections to perceive larger fields and self-refine from the historically learned knowledge.
    \item Extensive experiments verify the effectiveness and superiority of our framework on three benchmark image-text datasets: Flickr30K, MS-COCO, and CC152K. Additionally, comprehensive ablation studies and insightful analyses demonstrate the reliability and practicability of the proposed CRCL.
\end{itemize}

\section{The Proposed Method}
\subsection{Preliminaries and Problem Statement}

To be specific, we first provide some definitions of instance-level image-text matching so as to conveniently study noisy correspondence. Let $\mathcal{D}=\{\mathcal{I},\mathcal{T},\mathcal{Y}\}$ be an image-text dataset, where $\mathcal{I}=\left\{ I_i \right\}^N_{i=1}$ and $\mathcal{T}=\left\{ T_i \right\}^N_{i=1}$ are the training image and text set with size of $N$. The correspondence label space is defined as $\mathcal{Y} = \left\{ y_{ij}|i=1,\cdots,N;j=1,\cdots,N \right\}$, where $y_{ij}$ represents the correspondence of pair $(I_i,T_j)$, \ie if $I_i$ and $T_j$ are matched (\ie positive pair), $y_{ij}=1$ otherwise $y_{ij}=0$. We assume each pair with the same indices has matched correspondence, \ie $y_{ii} = 1, i=1,\cdots, N$. However, due to the ubiquitous noise during data collection, some negative pairs are mismatched as positives, \aka noisy correspondence (NC)~\cite{huang2021learning,qin2022deep}, which would introduce wrong supervisory information and misguide model training, leading to performance degradation. Mathematically, we define NC as shown in~\Cref{def1}.

\begin{definition}
    Due to the existence of NC, the learner only has access to the noisy training data ($\mathcal{D}_\eta$), instead of clean data ($\mathcal{D}$). Thus, the correspondence label for pair $(I_i,T_j)$ is reconsidered as
\begin{equation}
    \tilde{y}_{ij} = \left\{ \begin{array}{ll}
         y_{ij}&  \text{with probability }(1-\eta_{ij} ), \\
          1 - y_{ik}& \text{with probability }  \bar{\eta}_{ik}, \forall k\neq j.\\
    \end{array}\right.
\end{equation}

For all pairs,  conditioned on that  if $i=j$ then $y_{ij} = 1$ else $y_{ij} = 0$, we have $\sum_{j \neq k} \bar{\eta}_{ik} =  {\eta}_{ij}$. Similar to the definitions of noisy labels~\cite{ghosh2017robust},  we assume that NC is uniform, \ie $\eta_{ij} = \eta$ and $\bar{\eta}_{ik} = \frac{\eta}{N-1}, \forall k\neq j$, where $\eta$ is a constant to represent the noise rate. 

\label{def1}
\end{definition}

The key to learning with noisy correspondence is to alleviate the misguiding impact and rectify the mismatched pairs. One direct solution is to enhance the robustness of loss function $\mathcal{L}$ against noisy pairs, which can help prevent overfitting on mismatched pairs. The second aims at using the memorization effect of DNNs~\cite{arpit2017closer} to discriminate the mismatched pairs, thus removing unreliable supervision from the training data.

For image-text matching, images and texts are first projected into a shared representation space by two modality-specific networks, denoted as $v$ and $g$, respectively. The cross-modal similarity of a given pair $(I_i,T_j)$ is then computed as $S\left(v(I_i), g(T_j)\right)$, where $S(*)$ could be the cosine function~\cite{faghri2017vse++,chen2021learning} or a relevance inference module~\cite{lee2018stacked,diao2021similarity}. For brevity, $S\left(v(I_i), g(T_j)\right)$ is denoted as $S\left(I_i, T_j\right)$ or $S_{ij}$ in the following. The computed similarities could be considered as supporting evidence for retrieved results. Thus, the learning objective of image-text matching is to maximize the cross-modal similarities of positive pairs while minimizing those of negatives in the latent common space, which is commonly achieved by using contrastive learning~\cite{faghri2017vse++,lee2018stacked,diao2021similarity}.

The widely-used triplet ranking loss~\cite{faghri2017vse++} has shown excellent performance in cross-modal contrastive learning tasks~\cite{faghri2017vse++,dong2019dual}. However, recent research~\cite{qin2022deep} has demonstrated that this loss function fails to perform well in image-text data with NCs, especially when using the hardest negative samples as comparison items. To address this issue, some works proposed an adaptive soft margin approach to improve the robustness of the ranking loss~\cite{huang2021learning,han2023noisy, yang2023bicro}, which is defined as follows:
\begin{equation}
    \mathcal{L}_{soft}\left(I_i, T_i\right)=\left[\hat{\alpha}_i-S\left(I_i, T_i\right)+S\left(I_i, \hat{T}_h\right)\right]_{+}+\left[\hat{\alpha}_i-S\left(I_i, T_i\right)+S\left(\hat{I}_h, T_i\right)\right]_{+},
\end{equation}
where $\hat{T}_h$ and $\hat{I}_h$ denote the hardest cross-modal samples in a mini-batch, $\hat{\alpha}_i$ is a soft margin adaptively computed by $\hat{\alpha}_i=\frac{m^{\hat{y}_{ii}}-1}{m-1} \alpha$, $\alpha$ is a constant margin, $m$ is a curve parameter, and $\hat{y}_{ii}$ is the rectified correspondence between $I_i$ and $T_i$. However, this approach has two disadvantages: 1) The margin setting $\alpha$ may not be consistent with the empirical setting under NC scenarios. 2) The inaccurate prediction $\hat{y}_{ii}$ can still easily produce the risk of misleading gradient, which can cause trivial solutions to fail, especially in the case of high noise (\eg the results of NCR~\cite{huang2021learning} and BiCro~\cite{yang2023bicro} on Flickr30K with 80\% noise). To overcome these limitations, we propose a novel Active Complementary Loss (ACL) under the risk minimization theory~\cite{manwani2013noise,ghosh2017robust} to provide noise tolerance for noisy pairs while ensuring discriminative learning.

\subsection{Active Complementary Loss}

For the image-text dataset $\mathcal{D}$, our goal is to learn a cross-modal model ($\mathcal{M}$) that can discriminatively identify the positive (matched) and negative (unmatched) pairs well for retrieval, which is intuitively equivalent to maximizing bidirectional matching probabilities of positives. The bidirectional matching probabilities for pair $(I_i, T_j)$ are defined as:

\begin{equation}
{p}^{\circ}_{ij} =
 f(I_i, T_j)
= \frac{e^{(S_{ij}/\tau)}}{\sum ^N_{l=1} e^{(S_{il}/\tau)}},\ \ \ 
{p}^{\diamond}_{ij} =
 f(T_j, I_i)
= \frac{e^{(S_{ij}/\tau)}}{\sum ^N_{l=1} e^{(S_{lj}/\tau)}},
\label{eq.3}
\end{equation}

where $\tau$ is a temperature parameter \cite{wu2018unsupervised,caron2020unsupervised,hu2021learning}, $f$ is regarded as the cross-modal decision function, and ``$\circ$, $\diamond$ '' means the two retrieval directions, \ie image-to-text and text-to-image, respectively. For any loss function $\mathcal{L}$, the matching risk of $f$ for image-text matching can be defined as
\begin{equation}
    R_\mathcal{L}(f) = \mathbb{E}_{(I_i,y_{i\cdot})\sim\mathcal{D}}\left[ \mathcal{L}(f(I_i,T_\cdot),y_{i\cdot}) \right] + \mathbb{E}_{(T_i,y_{\cdot i})\sim\mathcal{D}}\left[ \mathcal{L}(f(T_i,I_\cdot),y_{\cdot i}) \right],
\end{equation}
where $\mathbb{E}[\cdot]$ represents the expectation operator. Considering noisy correspondences, the risk of $f$ in noisy data $\mathcal{D}_\eta$ could be formulated as follows:
\begin{equation}
    R^\eta_\mathcal{L}(f) = \mathbb{E}_{(I_i,\tilde{y}_{i\cdot})\sim\mathcal{D}_\eta}\left[ \mathcal{L}(f(I_i,T_\cdot),\tilde{y}_{i\cdot}) \right] + \mathbb{E}_{(T_i,\tilde{y}_{\cdot i})\sim\mathcal{D}_\eta}\left[ \mathcal{L}(f(T_i,I_\cdot),\tilde{y}_{\cdot i}) \right],
\end{equation}
where $\tilde{y}_{ij}$ is the noisy correspondence label as shown in~\Cref{def1}. Thus, the cross-modal learning objective is to learn a model $\mathcal{M}^*_{\mathcal{L}}$ with a global minimizer $f^*_\eta$ of $R^\eta_\mathcal{L}(f)$. To achieve robustness, $f^*_\eta$ should be also the global minimizer of ${R}_\mathcal{L}(f)$ on the noise-free data.



 Inspired by complementary contrastive learning~\cite{hu2023cross}, we propose to optimize the matching probabilities of all negative pairs for learning with noisy data indirectly, thereby avoiding fast overfitting to NC. Simultaneously, to further improve the noise tolerance, we introduce an exponential normalization to smooth the complementary loss. Hence, the robust complementary loss for pair $(I_i,T_i)$ could be formulated as: 
\begin{equation}\label{rcl}
    \begin{aligned}
    \mathcal{L}_{r} (I_i,T_i,q) &=  \mathcal{L}^\circ_{r} (I_i,T_i,q) +  \mathcal{L}^\diamond_{r} (T_i,I_i,q) \\
    &=\sum^N\limits_{j \neq i}\tan  (p^\circ_{ij})/\Big(\sum^N\limits_{k = 1} \tan ( p^\circ_{ik} ) \Big)^q + \sum^N\limits_{j \neq i} \tan ( p^\diamond_{ji})/\Big(\sum^N\limits_{k = 1} \tan  (p^\diamond_{ki} )\Big)^q ,
    \end{aligned}
\end{equation}
where $\tan(\cdot)$ is the tan function and $q\in[0,1]$ is a regulatory factor. Theoretically, for any input $(I_i,T_i)$ under noise rate $\eta\leq\frac{N-1}{N}$, we can show (see proofs in supplementary material)
\begin{equation}
    C \leq R_{\mathcal{L}_r}(f^*) - R_{\mathcal{L}_r}(f^*_\eta)\leq 0,
     \label{eqbd}
\end{equation}
where $C = 2\eta (A^{(1-q)}_{\min} - A^{(1-q)}_{\max})/(1-\frac{N\eta}{N-1})
\leq 0$.
$C$ increases as $q$ increases and when $q=1$,  $C$ takes the maximum value $0$. $A_{\min}$ and $A_{\max}$ are the maximum and minimum values of $\sum^N_{j=1}\tan(p_{ij})$ under the condition $\sum^N_{j=1}p_{ij}=1$, where $1 < A_{\min} < A_{\max}$, and $0\leq p_{ij} \leq 1$ ($p_{ij} = p^\circ_{ij}\text{ or } p_{ij}^\diamond$). $f^*$ and $f^*_\eta$ are the global minimizers of ${R}_{\mathcal{L}_{r}}(f)$ and ${R}^\eta_{\mathcal{L}_{r}}(f)$, respectively.

\textbf{Analysis:} The larger the $q$ is, $C\rightarrow0$, the tighter the bound of \Cref{eqbd} is. When $q=0$, $\mathcal{L}_r$ is a standard complementary contrastive loss~\cite{hu2023cross}.  
In the extreme case of $q = 1$, ${R}_{\mathcal{L}_{r}}(f^*) = {R}_{\mathcal{L}_{r}}(f^*_\eta)$,
\ie $\mathcal{L}_r$ is noise tolerant since $f^*_\eta$ and $f^*$ are simultaneously the minimizers of ${R}_{\mathcal{L}_{r}}(f)$ (It can also be obtained from \Cref{th2} that $\mathcal{L}_r$ is robust under $q=1$). Put another way, as $q$ approaches 1, the optimum $f^*_\eta$ of the noisy risk will be close to $f^*$ on the clean data more likely, which implies noise tolerance.

Like most robust learning methods~\cite{ma2020normalized,zhang2018generalized}, we should focus more on reliable data, while less on unreliable data. In other words, a smaller value of $q$ is used for more convincing pairs (with larger $\hat{y}_{ii}$), while a larger value of $q$ is for less convincing pairs (with smaller $\hat{y}_{ii}$). Thus, we could empirically utilize the soft corrected label $\hat{y}_{ii}$ to recast $q$ like the soft margin used in NCR~\cite{huang2021learning}, \ie $q = 1-\hat{y}_{ii}$. However, indirect learning will face the underfitting problem, resulting in suboptimal/insufficient performance. To address this issue, we introduce a weighted active learning loss $\mathcal{L}_{d}$ to make the model pay more attention to positive/matched pairs, \ie $ \mathcal{L}_{d}(I_i,T_i,\tilde{y}_{ii}) = - \tilde{y}_{ii} \left( \log p^\circ_{ii} + \log p^\diamond_{ii} \right)$. This positive learning will mine discrimination from direct supervision, which complements the complementary learning loss. By combining active and complementary learning losses, our active complementary loss is defined as:
\begin{equation}
    \mathcal{L}_{acl}(I_i,T_i,\hat{y}_{ii}) =  \mathcal{L}_{d}(I_i,T_i, \hat{y}_{ii}) + \lambda \mathcal{L}_{r}(I_i,T_i, 1-\hat{y}_{ii}),
    \label{eq8}
\end{equation}
where $\lambda$ is a scale factor to prevent $\mathcal{L}_d$ from dominating the cross-modal training and quickly overfitting NC. As shown in \Cref{eq8}, when $(I_i, T_i)$ is a noisy pair and $\hat{y}_{ii}$ ideally approaches 0, the loss emphasizes robust complementary learning, thus mitigating overfitting to NC. Conversely, when $(I_i, T_i)$ is a clean pair and $\hat{y}_{ii}$ ideally approaches 1, the loss focuses on discriminative learning, thereby facilitating the accurate acquisition of visual-semantic associations. However, due to computational resource constraints, we cannot use the entire training set to perform cross-modal learning. Therefore, we relax $N$ to the size $K$ of the mini-batch $\mathbf{x}$ by Monte Carlo sampling. Without loss of generality, the final loss for cross-modal learning is given by:
\begin{equation}
    \mathcal{L}_{acl}(\mathbf{x}) = \frac{1}{K} \sum^K_{i=1} \mathcal{L}_{acl}(I_i,T_i,\hat{y}_{ii}).
    \label{cnlloss}
\end{equation}
\begin{lemma}
     In an instance-level cross-modal matching problem, under uniform NC with noise rate $\eta \leq \frac{N-1}{N}$,  when $q=1$, $\mathcal{L}_r$ is noise tolerant.
     \label{th2}
\end{lemma}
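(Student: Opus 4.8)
The plan is to establish noise tolerance in the risk-minimization sense of~\cite{manwani2013noise,ghosh2017robust}, \ie to show that the global minimizer $f^*_\eta$ of the noisy risk $R^\eta_{\mathcal{L}_r}(f)$ is simultaneously a global minimizer of the clean risk $R_{\mathcal{L}_r}(f)$. The standard route for uniform noise is to prove that $R^\eta_{\mathcal{L}_r}$ is an affine function of $R_{\mathcal{L}_r}$ with a nonnegative slope and an $f$-independent intercept; the two risks then share the same $\arg\min$. The crucial observation enabling this at $q=1$ is that the exponential normalization makes $\mathcal{L}_r$ behave as a symmetric loss.

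First I would expose this symmetric structure. Setting $q=1$ in \Cref{rcl} and using $\sum_{j\neq i}\tan(p^\circ_{ij}) = \sum_{k=1}^N \tan(p^\circ_{ik}) - \tan(p^\circ_{ii})$, the image-to-text term collapses to $\mathcal{L}^\circ_r = 1 - \tan(p^\circ_{ii})/\sum_{k}\tan(p^\circ_{ik})$, and the text-to-image term simplifies analogously. Writing $L^{\circ,(m)}(I_i)$ for the same loss evaluated as if the matched text sat at index $m$ (\ie omitting the $m$-th term from the complementary sum), I would verify the key identity $\sum_{m=1}^N L^{\circ,(m)}(I_i) = N - \sum_{m}\tan(p^\circ_{im})/\sum_{k}\tan(p^\circ_{ik}) = N-1$, a constant independent of $f$. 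This sum-to-constant property is precisely the symmetry condition underpinning robustness to uniform noise.

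Next I would substitute the uniform noise model of \Cref{def1} into $R^\eta_{\mathcal{L}_r}$. For a fixed anchor $I_i$ whose true match is index $i$, the noisy positive stays at $i$ with probability $1-\eta$ and moves to each $m\neq i$ with probability $\eta/(N-1)$, so the conditional expected loss equals $(1-\eta)L^{\circ,(i)} + \frac{\eta}{N-1}\sum_{m\neq i} L^{\circ,(m)}$. Invoking the identity through $\sum_{m\neq i}L^{\circ,(m)} = (N-1)-L^{\circ,(i)}$ and simplifying gives $\bigl(1-\tfrac{N\eta}{N-1}\bigr)L^{\circ,(i)}+\eta$. Taking the data expectation and adding the mirror text-to-image contribution, I obtain $R^\eta_{\mathcal{L}_r}(f) = \bigl(1-\tfrac{N\eta}{N-1}\bigr)R_{\mathcal{L}_r}(f) + 2\eta$.

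Finally, the hypothesis $\eta\leq\frac{N-1}{N}$ forces the slope $1-\tfrac{N\eta}{N-1}\geq 0$, so $R^\eta_{\mathcal{L}_r}$ is a nondecreasing affine image of $R_{\mathcal{L}_r}$ and every minimizer of the former minimizes the latter; when the slope is strictly positive the two $\arg\min$ sets coincide exactly, and in the boundary case $\eta=\frac{N-1}{N}$ the noisy risk is the constant $2\eta$, so $f^*$ is trivially among its minimizers. This yields $R_{\mathcal{L}_r}(f^*)=R_{\mathcal{L}_r}(f^*_\eta)$, matching the $q=1$ equality implied by \Cref{eqbd}, and proves noise tolerance. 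I expect the main obstacle to lie in the first step --- cleanly recognizing and formalizing that the $q=1$ normalization produces the sum-to-constant symmetric-loss property --- since once symmetry is secured, the risk decomposition and the affine-relation conclusion follow from routine uniform-noise algebra.
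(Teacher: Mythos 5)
Your proposal is correct and follows essentially the same route as the paper's own proof: you expose the same $q=1$ identity $\sum_{m}\mathcal{L}^{\circ}_{r}(I_i,T_m,1)=N-1$ (the paper writes it as $\sum_{j\neq i}\mathcal{L}^{\circ}_{r}(I_i,T_j,1)=(N-1)-\mathcal{L}^{\circ}_{r}(I_i,T_i,1)$), derive the identical affine relation $R^{\eta}_{\mathcal{L}_r}(f)=\bigl(1-\tfrac{N\eta}{N-1}\bigr)R_{\mathcal{L}_r}(f)+2\eta$, and conclude via the nonnegative slope. Your explicit framing of the identity as the symmetric-loss condition of Ghosh et al., and your separate handling of the boundary case $\eta=\tfrac{N-1}{N}$, are only presentational refinements of the same argument.
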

\begin{proof}
    The proofs of \Cref{eqbd} and \Cref{th2} can be found in the supplementary material.
\end{proof}

\subsection{Self-refining Correspondence Correction \label{sec3.4}}
Another key to solving NC is how to obtain accurate correspondence estimations so as to reduce the adverse effects of NC.
To this end, we propose an effective Self-refining Correspondence Correction paradigm (SCC). SCC leverages Momentum Correction (MC) to aggregate historical predictions, providing stable and accurate correspondence estimations while alleviating the over-memorization to NC. To eliminate the error accumulation against NC, we combine multiple independent Self-Refining (SR) in the entire training process. Specifically, the MC for the correspondence of $(I_i,T_i)$ at the $t$-th epoch is defined as follows:
\begin{equation}
    {y}_{ii}^t =  
        \beta {y}^{t-1}_{ii} + (1-\beta) \hat{p}^t(I_i,T_i),
        \label{eq.9}
\end{equation}
where $\beta \in (0,1)$ represents the momentum coefficient, $ \hat{p}^t(I_i,T_i) = (p^\circ_{ii} +  p^\diamond_{ii})/2$ denotes the average matching probability at the $t$-th epoch. Through adequate cross-modal training, our CRCL will obtain a more stable and accurate soft correspondence label by smoothly evolving and expanding the receptive field of correction based on historical predictions with MC. Notably, as training progresses, some pairs would always be incorrectly distinguished as clean or noisy ones, resulting in the error accumulation of the estimated labels (see \Cref{fig1c}). Additionally, even though the updates performed by MC help reduce the negative influence of NC, the initial correspondence label $\left(y^0_{ii}\right)$ still greatly affects the quality of subsequent smooth corrections. In other words, providing more accurate initial correspondences weakens the DNN's memorization against NC, thereby reducing the risk of error accumulation.

\begin{algorithm}[!htb] 
   \caption{The pseudo-code of CRCL}  
   \textbf{Input:} A noisy training dataset $\mathcal{D}_\eta$, image-text matching model $\mathcal{M}(\Theta)$;\\  
   \textbf{Initialize:} $\Theta$;\\
   \For{$e^j$ in $\left[e_1,e_2,\cdots,e_m\right]$}{
        \For{$t$ in $[1,2,\cdots,e^j]$}{
            \For {$\mathbf{x}$ in batches}{
                Obtain the bidirectional matching probabilities of $\mathbf{x}$ with \Cref{eq.3};\\ 
                Update the correspondence labels with \Cref{eq.11};\\
                Obtain the corrected labels with \Cref{eq.12};\\
                Compute the overall loss $\mathcal{L}_{acl}(\mathbf{x})$ with \Cref{cnlloss};\\
                ${\Theta}=\text{Optimizer}\left(\Theta,\mathcal{L}_{acl}(\mathbf{x})\right)$;\\
            }
        }
        Re-initialize  $\Theta$;\\
   }
   \textbf{Output:} The learned parameters $\hat{\Theta}$;  
   \label{alg1}
\end{algorithm}

To achieve accurate initial correspondence estimations, SCC refines updated correspondence labels historically in epochs using MC through multiple concatenated SR pieces during the entire training process. Subsequent SR pieces could gradually aggregate the learned correspondence knowledge from previous pieces, thus improving the quality of estimated correspondences progressively. Furthermore, each SR piece is trained from scratch, which aims to clear accumulated error/noise that has been memorized, thus providing more accurate correspondence predictions for subsequent training. Mathematically, SCC consists of multiple SR pieces ($[e_1,\cdots,e_m]$) based on MC, where each piece undergoes robust learning for $e_{j}$ ($j\in\{1,\cdots,m\}$) epochs. Thus, during the $j$-th SR training, the estimated soft label of the $i$-th pair at $t$-th epoch is reconsidered as follows:
\begin{equation}
y^{(j,t)}_{ii} = \left\{ \begin{array}{ll}
y^{{(j-1,e_{j-1})}}_{ii},&\text{if $t\leq e_f$}, \\ 
\hat{p}^{(j,t-1)}(I_i,T_i), &\text{if $j = 1$ and $t=(e_f+1)$}, \\ 
\beta y^{(j,t-1)}_{ii} + (1-\beta) \hat{p}^{(j,t-1)}(I_i,T_i), & \text{otherwise},
\end{array}\right.
 \label{eq.11}
\end{equation}
where $\hat{p}^{(j,t-1)}(*)$ is the average matching probability of pair $(I_i,T_i)$ at the $(t-1)$-th epoch during the $j$-th training piece, $e_f$ denotes the number of epochs to freeze the correspondence label, preventing insufficient model training in the early stage from affecting the correction quality. In our experiments, we set all initial labels to 1, assuming that all training pairs are matched at the beginning of the first SR piece. In practice, we assign the label of the confident noisy pair as 0 to reduce the risk of producing erroneous supervision information. Therefore, the final corrected correspondence label used for $\mathcal{L}_{acl}$ is defined as:
\begin{equation}
     \hat{y}_{ii} = \left\{ \begin{array}{ll}
        0,&\text{if $ y^{(j,t)}_{ii}< \epsilon$}, \\ 
        y^{(j,t)}_{ii},&\text{otherwise},
\end{array}\right.
     \label{eq.12}
\end{equation}
where $\epsilon=0.1$ is a fixed threshold used to filter the confident noisy pairs in experiments.

\section{Experiments}
In this section, comprehensive experiments are conducted on three widely used benchmarks to demonstrate the robustness and effectiveness of our CRCL under multiple scenarios, including synthetic noise, real-world noise, and well-annotated correspondence.

\subsection{Datasets and Protocols}

\textbf{Datasets:} For an extensive evaluation, we use three benchmark datasets (\ie Flickr30K \cite{young2014image}, MS-COCO \cite{lin2014microsoft} and CC152K \cite{huang2021learning}) in our experiments. More specifically, Flickr30K is a widely-used image-text dataset collected from the Flickr website, which comprises 31,000 images and each one has 5 corresponding textual descriptions. Following \cite{karpathy2015deep}, 30,000 images are employed for training, 1,000 images for validation, and 1,000 images for testing in our experiments. MS-COCO is a large-scale image-text dataset, which has 123,287 images, and 5 captions are given to describe each image. We follow the split of \cite{karpathy2015deep,lee2018stacked} to carry out our experiments, \ie 5000 validation images, 5000 test images, and the rest for training. CC152K is a subset of Conceptual Captions (CC) \cite{sharma2018conceptual} collected in the real world, which is selected by \cite{huang2021learning}. Due to the absence of manual annotation, there are about 3\% $\thicksim$ 20\% incorrect correspondences in CC, \ie real-world noisy correspondences. CC152K contains 150,000 image-text pairs for training, 1,000 pairs for validation, and 1,000 pairs for testing.

\textbf{Evaluation Protocols:} Recall at K (R@K=1, 5, and 10) is used to measure the performance of bidirectional retrievals, which is defined as the proportion of the queries with the correct item in the top K retrieved results. Besides, flowing~\cite{qin2022deep}, we also take the sum of all Recalls to evaluate the overall performance, \ie rSum.

\subsection{Implementation Details}
Our CRCL is a generalized robust framework that could extend existing methods to confront noisy correspondences. To demonstrate the effectiveness and robustness of CRCL, we extend two representative methods, \ie  VSE$\infty$\cite{chen2021learning} and SGRAF (SGR and SAF) \cite{diao2021similarity}, to perform robust image-text matching, respectively. Specifically, the shared hyper-parameters are set as the same as the original works \cite{diao2021similarity,chen2021learning}, \eg the batch size is 128, the word embedding size is 300, and the joint embedding dimensionality is 1,024. More specific hyper-parameters and implementation details are given in our supplementary material due to the space limitation.

\subsection{Comparison with State-of-the-Arts \label{sec4.3}}
In this section, we evaluate our CRCL by comparing it with 7 state-of-the-art methods on three benchmarks, \ie SCAN (ECCV'18) \cite{lee2018stacked}, SGRAF (SGR and SAF, AAAI'21) \cite{diao2021similarity}, VSE$\infty$ (CVPR'21) \cite{chen2021learning}, NCR (NeurIPS'21) \cite{huang2021learning}, DECL (ACM MM'22) \cite{qin2022deep}, BiCro (CVPR'23)~\cite{yang2023bicro} and 
MSCN (CVPR'23)~\cite{han2023noisy}. For a fair comparison, all tested approaches adopt the same visual features (BUTD features)~\cite{lee2018stacked} and textual backbone Bi-GRU~\cite{schuster1997bidirectional}. To comprehensively investigate the robustness of our method, we artificially inject synthetic false correspondence of different ratios by proportionally shuffling the captions on Flickr30K and MS-COCO like \cite{huang2021learning}, \ie 20\%, 40\%, 60\%, and 80\% noise rates. In addition to synthetic noise, we also evaluate the robustness of tested methods against the real-world noisy correspondences on CC152K. Due to the space limitation, we only provide the results on MS-COCO 5K under well-annotated correspondences in~\Cref{tb5k}. For fairness, like~\cite{qin2022deep,yang2023bicro}, the ensemble results of CRCL-SGRAF are reported in the paper. More extensive comparison results are provided in the supplementary material to fully demonstrate the superiority of CRCL.

\begin{table}[h]
\caption{Performance comparison (R@K(\%) and rSum) of image-text matching on Flickr30K and MS-COCO 1K. The highest scores are shown in \textbf{bold}. `*' means robust methods.}
\label{tb1}
\Large
\centering
\setlength{\abovedisplayskip}{5pt}
\resizebox{\textwidth}{!}{
\begin{tabular}{c|l|ccc|ccc|c|ccc|ccc|c}
\toprule
\multicolumn{2}{c|}{}&\multicolumn{7}{c|}{\centering Flickr30K}&
\multicolumn{7}{c}{\centering MS-COCO 1K}\\
\multicolumn{2}{c|}{}&\multicolumn{3}{c|}{\centering Image $\rightarrow$ Text}&
\multicolumn{3}{c|}{\centering Text $\rightarrow$ Image}&&
\multicolumn{3}{c|}{\centering Image $\rightarrow$ Text}&
\multicolumn{3}{c|}{\centering Text $\rightarrow$ Image}&
\\\midrule
Noise                &        Methods      & R@1  & R@5  & R@10 & R@1  & R@5  & R@10 & rSum  & R@1  & R@5  & R@10 & R@1  & R@5  & R@10 & rSum  \\ \midrule
\multirow{8}{*}{20\%} 
& SCAN&56.4&81.7&89.3&34.2&65.1&75.6&402.3&28.9&64.5&79.5&20.6&55.6&73.5&322.6\\
& SAF& 51.8&79.5&88.3&38.1&66.8&76.6&401.1&41.0&78.4&89.4&38.2&74.0&85.5&406.5\\
& SGR& 61.2&84.3&91.5&44.5&72.1&80.2&433.8&49.1&83.8&92.7&42.5&77.7&88.2&434.0\\
& VSE$\infty$& 69.0 & 89.2&  94.8&  48.8 & 76.3 & 83.8&461.9&73.5&93.3& 97.0&  57.4 &86.5 &92.8&500.5 \\
& NCR* &76.7&93.9&96.9&57.5&82.8&89.2&497.0&77.0&95.6&98.1&61.5&89.3&95.1&516.6\\
& DECL* &75.6&93.8&97.4&58.5&82.9&89.4&497.6&77.1&95.9&98.4&61.6&89.1&95.2&517.3\\
& BiCro*&\textbf{78.1} &94.4& {97.5} &{60.4}& {84.4}& {89.9}& {504.7}& {78.8} &{96.1} &98.6 &63.7 &90.3& 95.7& 523.2\\
& MSCN*& {77.4}&{94.9}&{97.6}&59.6&83.2&89.2&501.9&78.1&\textbf{97.2}&\textbf{98.8}&{64.3}&{90.4}&{95.8}&{524.6}\\
& \textbf{CRCL}*& {77.9}&\textbf{95.4}&\textbf{98.3}&\textbf{60.9}&\textbf{84.7}&\textbf{90.6}&\textbf{507.8}&\textbf{79.6}& {96.1}& {98.7}& \textbf{64.7}& \textbf{90.6} &\textbf{95.9} &\textbf{525.6} \\\midrule
\multirow{8}{*}{40\%} 
& SCAN&29.9& 60.5& 72.5& 16.4& 38.5& 48.6& 266.4&30.1&65.2&79.2&18.9&51.1&69.9&314.4\\
& SAF& 34.3& 65.6& 78.4& 30.1& 58.0& 68.5& 334.9&36.0&74.4&87.0&33.7&69.4&82.5&383.0\\
& SGR& 47.2& 76.4& 83.2& 34.5& 60.3& 70.5& 372.1&43.9&78.3&89.3&37.0&72.8&85.1&406.4\\
& VSE$\infty$& 30.2 &58.3 &70.2 &22.3 &49.6 &62.7&293.3&53.3& 84.3 &92.1&31.4& 63.8& 75.0&399.9\\
& NCR* &75.3& 92.1& 95.2& 56.2& 80.6& 87.4& 486.8&76.5&95.0&98.2&60.7&88.5&95.0&513.9\\
& DECL*&72.5 &93.1 &97.0& 55.8 &81.2& 88.1& 487.7&77.1&95.7&\textbf{98.3}&61.5&89.2&95.3&517.1\\
& BiCro*&74.6 &92.7& 96.2& 55.5& 81.1& 87.4 &487.5 &77.0 &\textbf{95.9} &\textbf{98.3}& 61.8 &89.2 &94.9 &517.1\\
& MSCN*& 74.4& 94.4 &96.9 &57.2 &81.7& 87.6& 492.2&74.8&94.9&98.0&60.3&88.5&94.4&510.9\\
&\textbf{CRCL}*&\textbf{77.8}&\textbf{95.2}&\textbf{98.0}&\textbf{60.0}&\textbf{84.0}&\textbf{90.2}&\textbf{505.2}&\textbf{78.2} &{95.7} &\textbf{98.3}& \textbf{63.3} &\textbf{90.3} &\textbf{95.7}&\textbf{521.5}\\\midrule
\multirow{8}{*}{60\%} 
& SCAN&16.9& 39.3 &53.9 &2.8 &7.4 &11.4 &131.7&27.8&59.8&74.8&16.8&47.8&66.4&293.4\\
& SAF& 28.3& 54.5 &67.5 &22.1 &47.3 &59.0 &278.7&28.2&63.9&79.4&31.1&65.6&80.5&348.7\\
& SGR& 28.7& 58.0 &71.0 &23.8 &49.5 &60.7 &291.7&37.6&73.3&86.3&33.8&68.6&81.7&381.3\\
& VSE$\infty$& 18.0& 44.0& 55.7&15.1&38.5&51.8&223.1 &33.4&64.8&79.1 &26.0 &60.1 &76.3&339.7\\
& NCR*& 68.7 &89.9 &95.5 &52.0 &77.6 &84.9 &468.6&72.7&94.0&97.6&57.9&87.0&94.1&503.3\\
& DECL*& 69.4& 89.4 &95.2& 52.6 &78.8 &85.9 &471.3&73.8&94.7&97.7&59.6&87.9&94.5&508.2\\
& BiCro*&67.6 &90.8 &94.4& 51.2 &77.6& 84.7& 466.3 &73.9& 94.4 &97.8 &58.3 &87.2 &93.9&505.5 \\
& MSCN*&70.4&91.0&94.9&53.4&77.8&84.1&471.6&74.4&95.1&97.9&59.2&87.1&92.8&506.5 \\
& \textbf{CRCL}*&\textbf{73.1}& \textbf{93.4}& \textbf{95.8}& \textbf{54.8}& \textbf{81.9}& \textbf{88.3} & \textbf{487.3}& \textbf{76.3}& \textbf{95.1 }&\textbf{97.9}& \textbf{60.8} &\textbf{89.0}& \textbf{95.1} &\textbf{514.2}\\\midrule

\multirow{8}{*}{80\%} 
& SCAN&5.1&18.1&27.3&3.9&13.1&19.1&86.6&22.2&51.9&67.5&13.8&41.1&58.6&255.1\\
& SAF& 12.2&32.8&48.4&11.8&30.5&41.5&177.2&24.2& 57.5 &74.1& 24.7& 57.1 &73.0 &310.6\\
& SGR& 13.7&35.1&47.6&12.1&30.9&41.9&181.3&26.7& 60.7 &75.6& 25.3 &58.2 &72.6 &319.1\\
& VSE$\infty$& 8.1& 23.1& 34.7& 7.4& 22.6& 31.8& 127.7&25.4&55.1&70.6&19.2&50.5&68.0&288.8 \\
& NCR*& 1.4&7.1&11.7&1.5&5.4&9.3&36.4&21.6& 52.6 &67.6 &15.1 &38.1& 49.8& 244.8\\
& DECL*&60.7& 84.6& 91.2& 42.1& 69.6& 78.6&426.8&65.6&91.6&96.6&52.0&83.0&91.3&480.1\\ 
& BiCro*&3.6&13.9&20.5&1.7&7.5&13.0&60.2&40.0&72.6&84.7&22.6&53.0&67.2&340.1 \\
& MSCN*&1.0 &4.4 &9.1& 0.4 &1.4 &2.5& 18.8&66.8& 91.6& 96.2& 52.7& 83.0& 90.9& 481.2 \\
& \textbf{CRCL}*&\textbf{62.3}& \textbf{86.8}& \textbf{92.8} &\textbf{46.0} &\textbf{73.6}&\textbf{ 82.2} &\textbf{443.7}&\textbf{72.7}&\textbf{93.5}&\textbf{97.6}&\textbf{57.5}&\textbf{86.8}&\textbf{93.7}&\textbf{501.8}\\
\bottomrule
\end{tabular}}
\vspace{-0.2cm}
\end{table}


\subsubsection{Results under Synthetic Noisy Correspondences}
For quantitative evaluation under specific noise levels, we conduct all tested methods under four different noise rates (\ie 20\%, 40\%, 60\%, and 80\%) of synthetic noisy correspondences on the Flickr30K and MS-COCO datasets. Quantitative results on Flickr30K and MSCOCO 1K test set are shown in \Cref{tb1}. For MS-COCO, the results are computed by a veraging over 5 folds of 1K test images. From the results, one can see that our CRCL could remarkably outperform the robust baselines (NCR, DECL, BiCro, and MSCN) on most of the metrics, which demonstrates the superior robustness of CRCL against NC. Moreover, our CRCL not only performs well in low noise but also achieves the best performance under high noise, especially 80\% noise, which provides strong evidence for the stability and robustness of our method.

\begin{table}
\Large
\caption{Performance comparison on CC152K and MS-COCO 5K.}
\centering
\setlength{\abovedisplayskip}{5pt}
\resizebox{\textwidth}{!}{
\begin{tabular}{l|ccc|ccc|c|ccc|ccc|c}
\toprule
~&\multicolumn{7}{c|}{\centering CC152K}&
\multicolumn{7}{c}{\centering MS-COCO 5K}\\
~&\multicolumn{3}{c|}{\centering Image $\rightarrow$ Text}&
\multicolumn{3}{c|}{\centering Text $\rightarrow$ Image}&&
\multicolumn{3}{c|}{\centering Image $\rightarrow$ Text}&
\multicolumn{3}{c|}{\centering Text $\rightarrow$ Image}&
\\\midrule
Methods &R@1  &R@5  &R@10  &R@1  &R@5  &R@10&rSum &R@1  &R@5  &R@10 &R@1  &R@5  &R@10 &rSum \\
\midrule
SCAN &30.5  &55.3  &65.3  &26.9  &53.0  &64.7 &295.7&44.7 &75.9  &86.6  &33.3  &63.5  &75.4 &379.4\\
VSE$\infty$&34.0&64.5&\textbf{77.0}&12.9&19.2&21.6&229.2&56.6&83.6&91.4&39.3&69.9&81.1&421.9 \\
SGRAF&32.5&59.5&70.0&32.5&60.7&68.7&323.9&58.8	&84.8	&92.1	&41.6	&70.9	&81.5&429.7 \\
NCR*&{39.5}  &64.5 &73.5  &40.3  &64.6  &73.2    &355.6 &58.2  &84.2  &91.5  &41.7  &{71.0}  &{81.3}  &427.9 \\
{DECL}*   &39.0  &{66.1}  &{75.5}  &{40.7}  &{66.3}  &{76.7} 
&{364.3}&{59.2}  &{84.5}  &91.5  &41.7  &70.6  &81.1 &428.6\\
MSCN*&40.1& 65.7& 76.6& 40.6& 67.4 &76.3& 366.7&-&-&-&-&-&-&-\\
BiCro*& 40.8 &67.2& 76.1 &\textbf{42.1}& 67.6& 76.4& 370.2&59.0&84.4&91.7&42.4&71.2&81.7&430.4\\
\textbf{CRCL}*& \textbf{41.8}&\textbf{67.4} &{76.5}& 41.6& \textbf{68.0}&\textbf{78.4}&\textbf{373.7}& \textbf{61.3}&\textbf{85.8}&\textbf{92.7}&\textbf{43.5}&\textbf{72.6}&\textbf{82.7}&\textbf{438.6}\\
\bottomrule
\end{tabular}}
\label{tb5k}
\vspace{-0.45cm}
\end{table}

\subsubsection{Results under Real Noisy Correspondences}
In addition to synthetic noise,  we carry the comparison experiments on the real-world noisy dataset CC152K. The quantitative results are shown in \Cref{tb5k}. From the results, one can see that our CRCL is superior to all baselines with the best overall performance of 373.7\%, which indicates that the proposed method is robust against real-world noise. Specifically, CRCL outperforms the best baseline BiCro, with absolute performance improvement of 1.0\%, 0.2\%, 0.4\%, 0.4\%, and 2.0\%  across different metrics, except for R@1 in text-to-image retrieval.

\subsubsection{Results under Well-annotated Correspondences}
Besides noisy correspondences, we also evaluate the tested methods trained on the well-aligned MS-COCO dataset for a comprehensive comparison. The results on MS-COCO 5K are shown in \Cref{tb5k}, wherein the results of all baselines are reported by the original papers for a fair comparison, except for the reproduced results of BiCro. From the table, our CRCL remarkably outperforms all baselines in terms of rSum. Specifically, CRCL prevails over the best baselines by 8.2\% on overall performance (\ie rSum) absolutely, which shows that our CRCL is not only suitable for noisy cases but also performs well in well-aligned ones.

\subsection{Comparison to pre-trained model}
In this section, we compare our CRCL to the pre-trained model CLIP~\cite{radford2021learning} to further evaluate its effectiveness in handling NC. CLIP is a well-known large pre-trained model that is trained from scratch on a dataset of 400 million image-text pairs collected from the Internet, which includes a large number of training pairs with real NCs. 
More specifically, following~\cite{huang2021learning}, 
we report the zero-shot results and fine-tuning results under 20\% noise and compare them with that of CRCL-SGRAF in~\Cref{tab_clip}.
From the results, CLIP shows a significant performance drop during fine-tuning under NC. On the contrary, the performance of our CRCL under 20\% noise is even better than the zero-shot result of CLIP (ViT-L/14), which shows the strength and potential of our CRCL in dealing with NC.

\begin{table}[h]
    \centering
    \caption{Comparison with CLIP on MS-COCO 5K.}
    \begin{tabular}{l|l|ccc|ccc|c}\toprule
    &&\multicolumn{3}{c|}{\centering Image $\rightarrow$ Text}&\multicolumn{3}{c|}{\centering Text $\rightarrow$ Image}\\\midrule
         Noise Rate& Methods& R@1& R@5& R@10& R@1& R@5& R@10&rSum  \\\midrule
        \multirow{2}{*}{0\%, Zero-Shot} & CLIP (ViT-L/14)&58.4& 81.5& 88.1& 37.8& 62.4& 72.2&400.4\\
                        & CLIP (ViT-B/32)&50.2& 74.6 &83.6& 30.4& 56.0& 66.8&361.6\\\midrule
        \multirow{3}{*}{20\%, Fine-tune} & CLIP (ViT-L/14)& 36.1&61.3&72.5&22.6&43.2&53.7&289.4\\
        & CLIP (ViT-B/32)&21.4& 49.6& 63.3& 14.8& 37.6& 49.6&236.3\\
                        &\textbf{Our  CRCL}&\textbf{59.3}& \textbf{85.2}& \textbf{91.9}& \textbf{42.9}& \textbf{71.9}& \textbf{82.1}&\textbf{433.3}\\\bottomrule                  
    \end{tabular}
    \label{tab_clip}
    \vspace{-0.3cm}
\end{table}


\subsection{Ablation Study}
In this section, we present ablation studies conducted on Flickr30K with 60\% noise, as shown in \Cref{tab_ab}. From the table, one can find that the full version of our CRCL achieves the best performance, which indicates that each component contributes to our method for performance improvement. By recasting $q$ with the rectified label $\hat{y}_{ii}$ using SCC, $\mathcal{L}_r$ shows higher potential against NC. Moreover, through the combination of active learning and robust complementary learning,  there is further performance improvement, which indicates the effectiveness of our ACL. Note that these results are obtained from a single model for a comprehensive evaluation, and are not ensemble results as shown in~\Cref{tb1,tb5k}.

\begin{table}[h]
\caption{Ablation studies on Flickr30K with 60\% noise.}
\label{tab_ab}
\centering 
\begin{tabular}{lc|ccc|ccc|c}
\toprule
\multicolumn{2}{l|}{Configuration}&\multicolumn{3}{c}{\centering Image $\rightarrow$ Text}&
\multicolumn{3}{c|}{\centering Text $\rightarrow$ Image}\\
loss&SCC&R@1  &R@5  &R@10  &R@1  &R@5  &R@10 &rSum\\\midrule
$\mathcal{L}_{acl}$&\checkmark&\textbf{70.5}&\textbf{91.3}&\textbf{95.6}&\textbf{52.5}&\textbf{79.4}&\textbf{86.8}&\textbf{476.1}\\
$\mathcal{L}_{d}$&\checkmark&69.3&91.1&95.2&51.4&78.2&86.0&471.2\\
$\mathcal{L}_{r}$&\checkmark&67.7& 91.2& 95.0& 52.3& 79.0& 86.4&471.6\\
\multicolumn{2}{l|}{$\mathcal{L}_r(q=1)$}&25.2& 53.9& 65.6& 20.3& 45.0& 57.2&267.2\\
\multicolumn{2}{l|}{$\mathcal{L}_r(q=0)$}&63.5&89.7& 93.8& 48.2& 74.5 &82.0&451.7\\
\multicolumn{2}{l|}{$\mathcal{L}_d$}&65.6 &87.5& 93.2& 47.7& 74.6& 82.7&451.3 \\
\bottomrule
\end{tabular}
\end{table}


\subsection{Visualization Analysis}
To further investigate the generalization and robustness of CRCL, we visually study the retrieval performance of our CRCL-VSE$\infty$, CRCL-SAF, and CRCL-SGR on Flickr30K and MS-COCO with varying noise rates in~\Cref{figv}(a,d). From the figure, one can see that CRCL can enhance the robustness of the original methods and perform well even under high noise. In addition, we provide visualization to intuitively showcase the effectiveness of the proposed SCC. Specifically, we visualize the distributions of both cross-modal similarities and corrected correspondences for noisy training pairs using CRCL-SGR on Flickr30K with 60\% NCs. For comparison, we also visualize the results without SCC, where we replace $\hat{y}_{ii}$ with the prediction $\hat{p}^t(I_i,T_i)$ for the current iteration. The visualizations clearly indicate that SCC prevents the over-accumulation of errors in correspondence correction, thus verifying its effectiveness in mitigating the impact of NC.

\begin{figure} [h]
\centering
\subfloat[][Flickr30K]{
\includegraphics[width=0.32\linewidth]{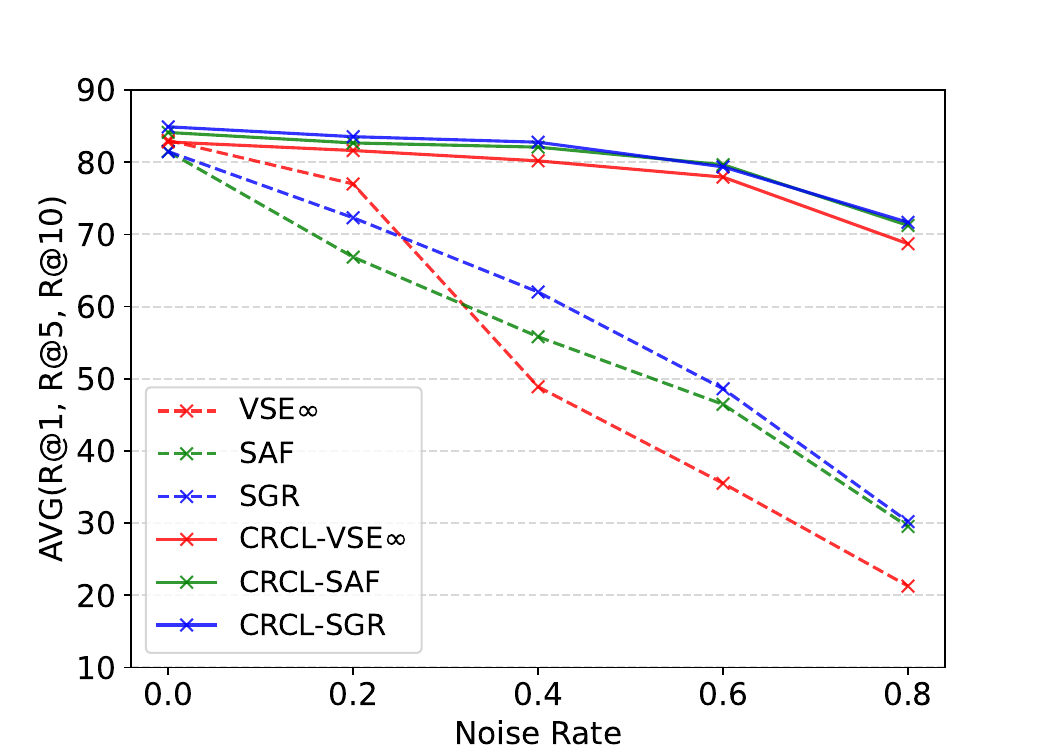}
\label{fig1a}}
\subfloat[][Similarities w/o SCC]{
\includegraphics[width=0.32\linewidth]{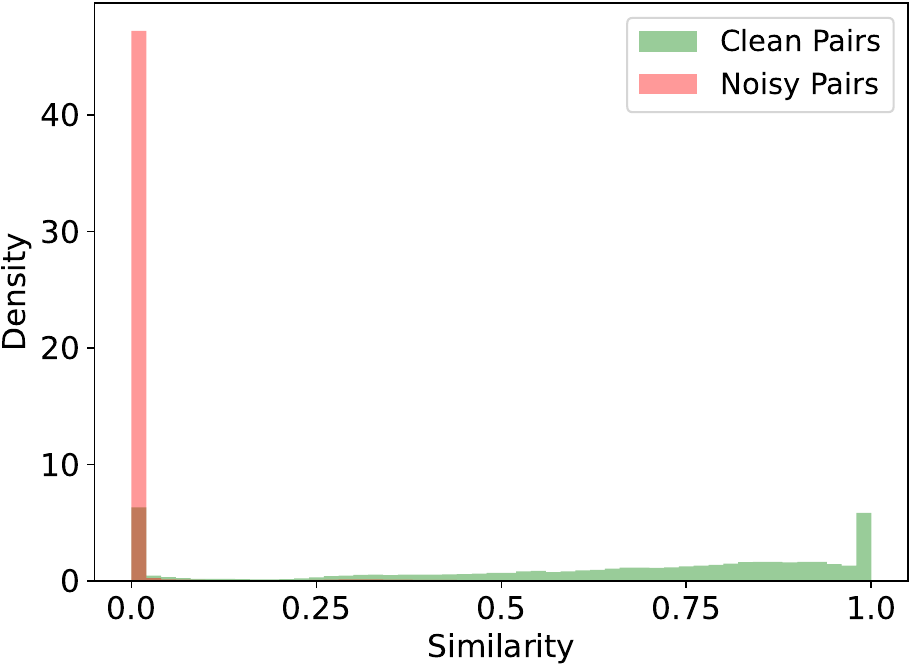}	
\label{fig1b}}
\subfloat[][Correspondences w/o SCC]{
\includegraphics[width=0.32\linewidth]{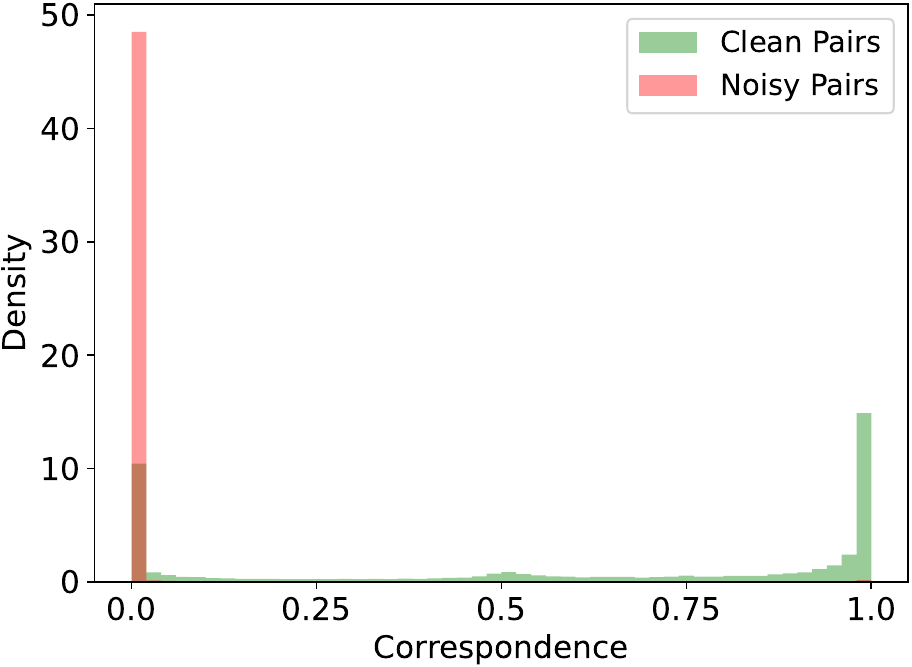}	
\label{fig1c}}
\\
\subfloat[][MS-COCO]{
\includegraphics[width=0.32\linewidth]{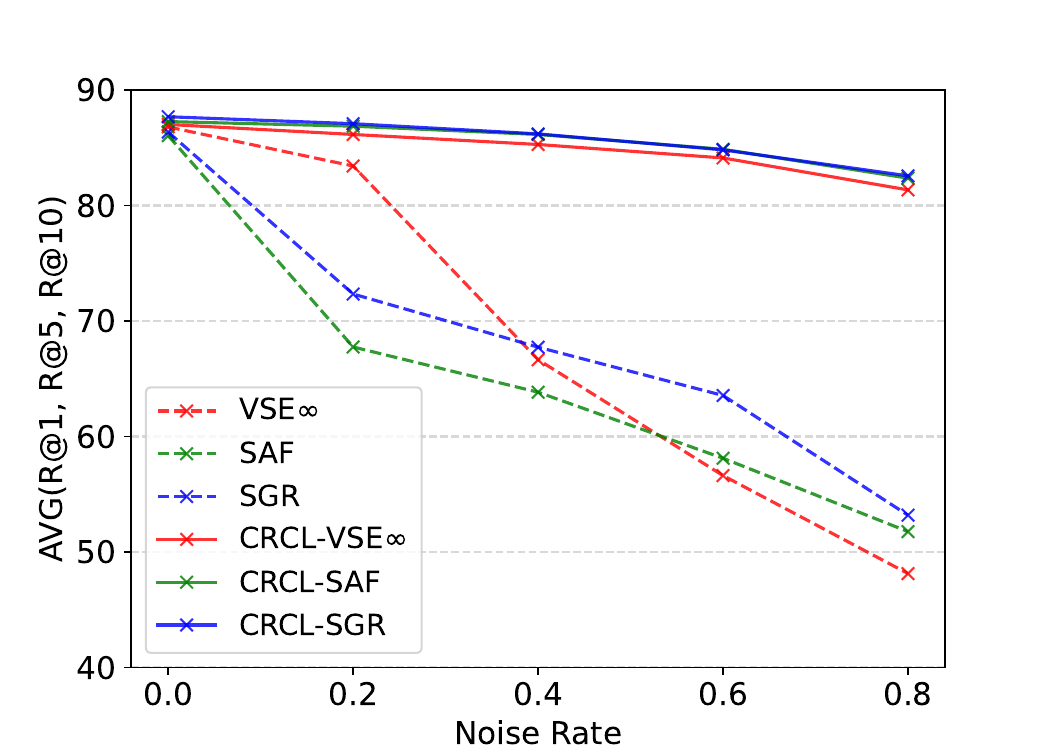}	
\label{fig1d}}
\subfloat[][Similarities with SCC]{
\includegraphics[width=0.32\linewidth]{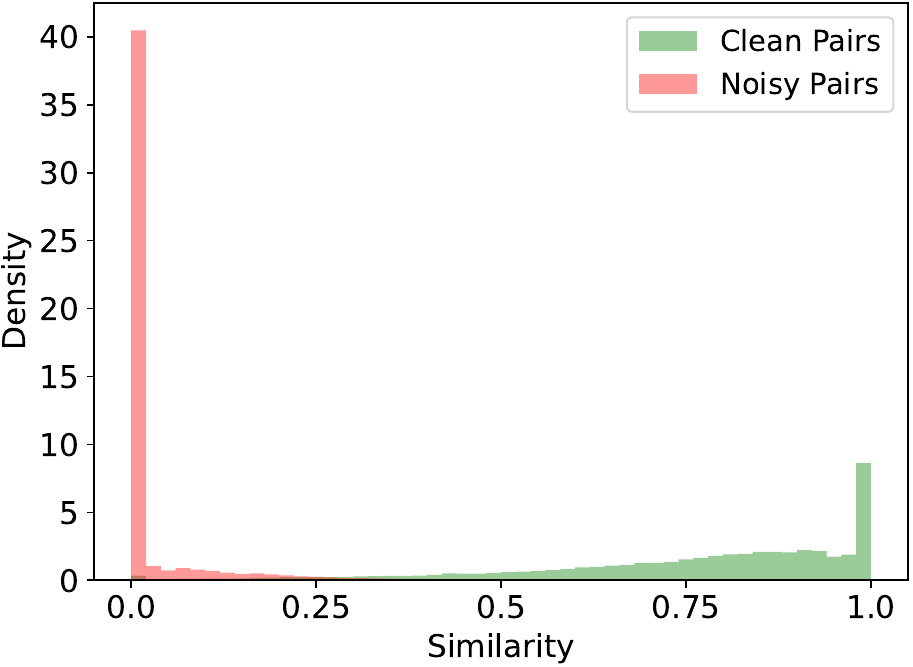}	
\label{fig1e}}
\subfloat[][Correspondences with SCC]{
\includegraphics[width=0.32\linewidth]{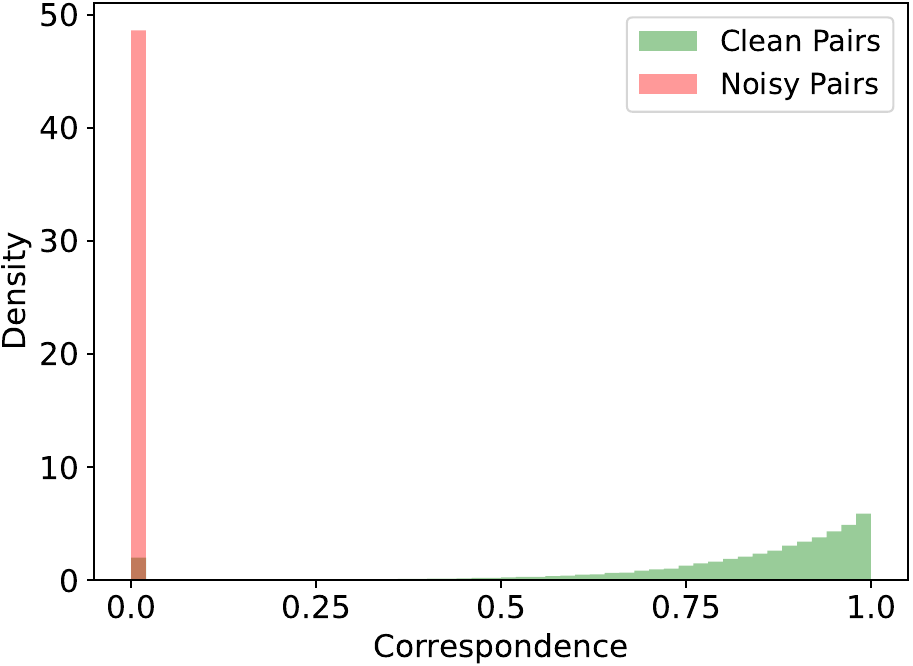}	
\label{fig1f}}
\caption{(a,d) The performance on Flickr30K and MS-COCO with varying noise rates; (b,c/e,f) The similarities and corrected correspondences of training pairs after learning without/with SCC.}
\label{figv}
\vspace{-0.3cm}
\end{figure}



\section{Conclusion}
In this paper, we propose a generalized robust framework CRCL to endow traditional methods with robustness against noisy correspondences. To alleviate the harmful effects brought by NC, we present an Active Complementary Loss (ACL) and a Self-refining Correspondence Correction technique (SCC). These techniques enable effective learning of visual-semantic associations and progressive correction of correspondences based on historical predictions, thus boosting the robustness of image-text matching. Extensive experiments demonstrate that our CRCL achieves state-of-the-art robustness to both synthetic and real-world NCs. Furthermore, ablation studies and visualization analyses further verify the effectiveness and reliability of the proposed components.

\section{Limitations and Broader Impact Statement}
Despite the promising performance of our proposed CRCL, there are some limitations that should be acknowledged. First,  we only study the NC problem between two modalities, \ie image and text. Second,  we did not take into account category-level noise. The proposed CRCL likely impacts various applications that require robust image-text matching, \eg multimedia retrieval, and image annotation. We encourage further study to understand and mitigate the biases and risks potentially brought by image-text matching.

\begin{ack}
    This work was supported by the National Natural Science Foundation of China (U19A2078, U21B2040, 62372315, 62176171, and 62102274), Sichuan Science and Technology Planning Project (2023YFQ0020, 2023YFG0033, 2023ZHCG0016, 2022YFQ0014, 2022YFH0021), Chengdu Science and Technology Project (2023-XT00-00004-GX), the SCU-LuZhou Sciences and Technology Coorperation Program (2023CDLZ-16), and Fundamental Research Funds for the Central Universities under Grant YJ202140.
\end{ack}


\bibliography{ref}{}
\bibliographystyle{unsrt}


\section*{Supplementary Material}
\appendix{
In this supplementary material, we provide additional information for CRCL. Specifically, we first give detailed proofs of \Cref{peqbd} and \Cref{pth1} in \Cref{a1}. 
To improve the reproducibility of CRCL, in \Cref{a3}, we provide comprehensive implementation details of our CRCL for different extended baselines (\ie VSE$\infty$\cite{chen2021learning}, SAF\cite{diao2021similarity}, and SGR\cite{diao2021similarity}) on three datasets. In addition, we present richer additional experimental results and analysis in \Cref{a4}, including parameter analysis, progressive analysis, and extra comparison results, to fully verify the effectiveness and superiority of CRCL. Finally, we supplemented related work in \Cref{a2} to further discuss the related research background.


}

\section{Detailed Proofs\label{a1}}
\subsection{Proof for~\Cref{peqbd}}
\begin{equation}
    C \leq R_{\mathcal{L}_r}(f^*) - R_{\mathcal{L}_r}(f^*_\eta)\leq 0,
     \label{peqbd}
\end{equation}
where $C = 2\eta (A^{(1-q)}_{\min} - A^{(1-q)}_{\max})/(1-\frac{N\eta}{N-1}) \leq 0$.
$C$ increases as $q$ increases and when $q=1$,  $C$ takes the maximum value $0$. $A_{\min}$ and $A_{\max}$ are the maximum and minimum values of $\sum^N_{j=1}\tan(p_{ij})$ under the condition $\sum^N_{j=1}p_{ij}=1$, where $1 < A_{\min} < A_{\max}$, and $0\leq p_{ij} \leq 1$ ($p_{ij} = p^\circ_{ij}\text{ or } p_{ij}^\diamond$). $f^*$ and $f^*_\eta$ are the global minimizers of ${R}_{\mathcal{L}_{r}}(f)$ and ${R}^\eta_{\mathcal{L}_{r}}(f)$, respectively. 

\begin{proof}
 Recall that for any $f$, 
$$
\begin{aligned}
    {R}_{\mathcal{L}_{r}}(f) = & {R}_{\mathcal{L}^\diamond_{r}}(f) + {R}_{\mathcal{L}^\circ_{r}}(f)\\
    =&\mathbb{E}_{(I_i,T_\cdot)\sim{\mathcal{D}}} \left[y_{i\cdot}\mathcal{L}^\circ_{r}(I_i,T_\cdot,q) \right] + \mathbb{E}_{(I_\cdot,T_i)\sim{\mathcal{D}}} \left[ y_{\cdot i} \mathcal {L}^\diamond_{r}(T_i,I_\cdot,q) \right].\\
=& \mathbb{E}_{(I_i,T_i)\sim{\mathcal{D}}} \left[\mathcal{L}^\circ_{r}(I_i,T_i,q) \right] + \mathbb{E}_{(I_i,T_i)\sim{\mathcal{D}}} \left[\mathcal {L}^\diamond_{r}(T_i,I_i,q) \right]
\end{aligned}
$$
For uniform noisy correspondence with noise rate $\eta$, we consider the image-to-text direction and have 
$$
\begin{aligned}
    {R}^\eta_{\mathcal{L}^\circ_{r}}(f) =&  \mathbb{E}_{(I_i,T_\cdot)\sim{\mathcal{D}_\eta}}\left[\tilde{y}_{i\cdot}\mathcal{L}^\circ_{r}(I_i,T_\cdot,q) \right]\\
    =&\mathbb{E}_{(I_i,T_\cdot)\sim{\mathcal{D}}}\left[  
    (1-\eta) \mathcal{L}^\circ_{r}(I_i,T_i,q) + \frac{\eta}{N-1} \sum_{j\neq i} \mathcal{L}^\circ_{r}(I_i,T_j,q) \right]\\    =&\mathbb{E}_{(I_i,T_\cdot)\sim{\mathcal{D}}}\left[  
    (1-\eta) \mathcal{L}^\circ_{r}(I_i,T_i,q) + \frac{\eta}{N-1} \left((N-1)\bigtriangleup^{(1-q)} - \mathcal{L}_r^\circ(I_i,T_i,q) \right)   \right]\\   
    =& \mathbb{E}_{(I_i,T_\cdot)\sim{\mathcal{D}}}\left[  
    (1-\frac{N\eta}{N-1}) \mathcal{L}^\circ_{r}(I_i,T_i,q) + \eta \bigtriangleup^{(1-q)} \right],\\
\end{aligned}
$$
where $\bigtriangleup = \sum\limits^N_{j=1}\tan(p_{ij}^\circ)>1$. Since $\bigtriangleup$ has the maximum and minimum values ($A_{\min}$ and $A_{\max}$, we provide a solution in \textbf{Remark}.) under the condition $\sum^N_{j=1}p^\circ_{ij}=1, 0\leq p^\circ_{ij} \leq 1$, for any $p^\circ_{ij}$, we have
$$
   (1-\frac{N\eta}{N-1}){R}_{\mathcal{L}^\circ_{r}}(f) + \eta A_{\min}^{(1-q)} \leq  {R}^\eta_{\mathcal{L}^\circ_{r}}(f) \leq  (1-\frac{N\eta}{N-1}){R}_{\mathcal{L}^\circ_{r}}(f) + \eta A_{\max}^{(1-q)}.
$$

Similarly, the above equation also holds for ${R}_{\mathcal{L}^\diamond_{r}}(f)$ and ${R}^\eta_{\mathcal{L}^\diamond_{r}}(f)$, \ie

$$
   (1-\frac{N\eta}{N-1}){R}_{\mathcal{L}^\diamond_{r}}(f) + \eta A_{\min}^{(1-q)} \leq  {R}^\eta_{\mathcal{L}^\diamond_{r}}(f) \leq  (1-\frac{N\eta}{N-1}){R}_{\mathcal{L}^\diamond_{r}}(f) + \eta A_{\max}^{(1-q)}.
$$

Thus, for ${R}^\eta_{\mathcal{L}_{r}}(f) $ and ${R}_{\mathcal{L}_{r}}(f)$, under $\eta\leq\frac{N-1}{N}$, we have 
$$
   (1-\frac{N\eta}{N-1}){R}_{\mathcal{L}{r}}(f) + 2\eta A_{\min}^{(1-q)} \leq  {R}^\eta_{\mathcal{L}{r}}(f) \leq  (1-\frac{N\eta}{N-1}){R}_{\mathcal{L}{r}}(f) + 2\eta A_{\max}^{(1-q)}.
$$
or equivalently,
$$
    ({R}^\eta_{\mathcal{L}{r}}(f) - 2\eta A_{\max}^{(1-q)}) / (1-\frac{N\eta}{N-1})  \leq  {R}_{\mathcal{L}{r}}(f) \leq  ({R}^\eta_{\mathcal{L}{r}}(f) - 2\eta A_{\min}^{(1-q)}) / (1-\frac{N\eta}{N-1}) .
$$
Thus, for $f^*_\eta$,
\begin{equation}
   R_{\mathcal{L}_r}(f^*) - R_{\mathcal{L}_r}(f^*_\eta)\ge  (R^\eta_{\mathcal{L}_r}(f^*) - R^\eta_{\mathcal{L}_r}(f^*_\eta))/(1-\frac{N\eta}{N-1}) + C \ge  C, 
\label{eq_b2}
\end{equation}
or equivalently,
\begin{equation}
 R^\eta_{\mathcal{L}_r}(f^*) - R^\eta_{\mathcal{L}_r}(f^*_\eta)\leq (1-\frac{N\eta}{N-1}) (R_{\mathcal{L}_r}(f^*) - R_{\mathcal{L}_r}(f^*_\eta)) + C^\prime \leq C^\prime,
\label{eq_b1}
\end{equation}

where $C = 2\eta (A^{(1-q)}_{\min} - A^{(1-q)}_{\max})/(1-\frac{N\eta}{N-1})
\leq 0$, $C^\prime = 2\eta (A^{(1-q)}_{\max} - A^{(1-q)}_{\min}) \ge 0$, $f^*$ is a minimizer of $R_{\mathcal{L}_r}(f)$. Since $f^*_\eta$ and $f^*$ are the minimizers of $R^\eta_{\mathcal{L}_r}(f)$ and $R_{\mathcal{L}_r}(f)$, respectively,
we have $R^\eta_{\mathcal{L}_r}(f^*) - R^\eta_{\mathcal{L}_r}(f^*_\eta)\ge 0$ or 
$R_{\mathcal{L}_r}(f^*) - R_{\mathcal{L}_r}(f^*_\eta)\leq 0$. Besides, it can be seen from~\Cref{fig_c} that $C/C^\prime$ increases/decreases as $q$ increases. In other words,
under $\eta \leq \frac{N-1}{N}$,
the larger $q$ is, the tighter the bound of~\Cref{eq_b1}/\Cref{eq_b2} is. When $q$ is 1, then $R_{\mathcal{L}_r}(f^*) = R_{\mathcal{L}_r}(f^*_\eta)$ or $R^\eta_{\mathcal{L}_r}(f^*) = R^\eta_{\mathcal{L}_r}(f^*_\eta)$.  This completes the proof.

\begin{figure}[h]
  \centering
  \includegraphics[width=0.4\textwidth]{./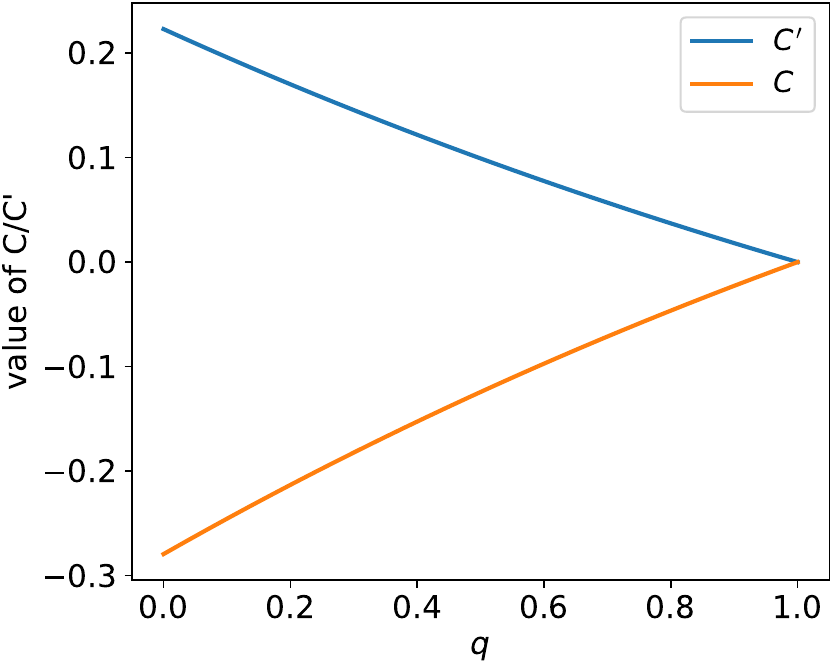}
  \caption{The value of $C/C^\prime$ changes with $q$, wherein $N$ is 100 and $\eta$ is 0.2.}
  \label{fig_c}
\end{figure}



\textbf{Remark}
For the maximum value of $\sum^N_{j=1}\tan(p_{ij})$, For brevity, let $y= \sum^N_{j=1}\tan(p_{ij})=\sum^N_{j=1}\tan(x_j)$ under the condition  $\sum^N_{j=1}x_j=1$ and $0\leq x_j \leq1$. For any $x_i,x_j\in[0,1]$, we have 
$$
\tan(x_i + x_j) = \frac{\tan(x_i)+\tan(x_j)}{1-\tan(x_i)\tan(x_j)}.
$$
Since $\ 0 \leq 1-\tan(x_i)\tan(x_j) \leq 1$, we have
$$
\tan(x_i+x_j) \ge (1-\tan(x_i)\tan(x_j)) \tan(x_i+x_j) = \tan(x_i) + \tan(x_j).
$$
Hence, $\sum^N_{j=1}\tan(x_j)\leq \tan(\sum^N_{j=1}x_j)= \tan1$, \ie $A_{max} = y_{max} = \tan1 \approx 1.5574$.

For the minimum value of $y=\sum^N_{j=1}\tan(x_{j})$, when $x_j\in(0,1)$, $\frac{\partial y}{\partial x_j}=\frac{1}{\cos^2x_j}>0$. Thus, the minimum value does not appear on the hyperplane boundary (0 or 1). We use the Lagrange Multiplier method~\cite{bertsekas2014constrained} to construct the objective as follows
$$
\min y,\ s.t. \sum^N_{j=1} x_j = 1 \Leftrightarrow \min_{x_j,\lambda} (y-\lambda (\sum^N_{j=1}x_j-1) ).
$$
Let $f(x,\lambda)=y-\lambda (\sum^N_{j=1}x_j-1)=\sum^N_{j=1}\tan(x_j)-\lambda (\sum^N_{j=1}x_j-1)$, for $x_j,\lambda$, we have
$$
\left\{\begin{array}{ll}
  \frac{\partial f}{\partial x_j}=\frac{1}{\cos^2 x_j}-\lambda,& j=1,2,\cdots,N,\\
  \frac{\partial f}{\partial \lambda}=\sum^N_{j=1}x_j-1.&
\end{array}\right.
$$
Let $\frac{\partial f}{\partial x_j}=\frac{\partial f}{\partial \lambda}=0$, we have 
$$
\left\{\begin{array}{ll}
  \lambda = \frac{1}{\cos^2x_j}, &j=1,2,\cdots,N,\\
  \sum^N_{j=1}x_j=1.
\end{array}\right.
$$
Thus, when $x_1=x_2=\cdots=x_N=\frac{1}{N}$, $y$ has a minimum value, \ie $A_{min}=y_{min}=N\tan\frac{1}{N}$>1.

\end{proof}

\subsection{Proof for \Cref{pth1}}
\begin{lemma}
     In an instance-level cross-modal matching problem, under uniform NC with noise rate $\eta \leq \frac{N-1}{N}$,  when $q=1$, $\mathcal{L}_r$ is noise tolerant.
     \label{pth1}
\end{lemma}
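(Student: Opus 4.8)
The plan is to establish noise tolerance in the precise sense introduced in the Analysis following~\Cref{eqbd}: I will show that when $q=1$, the global minimizer $f^*_\eta$ of the noisy risk $R^\eta_{\mathcal{L}_r}(f)$ is simultaneously a global minimizer of the clean risk $R_{\mathcal{L}_r}(f)$, so that optimizing on $\mathcal{D}_\eta$ recovers the same optimum as optimizing on the clean data $\mathcal{D}$.

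First I would reuse verbatim the risk decomposition already derived in the proof of~\Cref{peqbd}. There, for the image-to-text direction one obtains
\[
R^\eta_{\mathcal{L}^\circ_r}(f) = \mathbb{E}_{(I_i,T_\cdot)\sim\mathcal{D}}\left[\left(1-\tfrac{N\eta}{N-1}\right)\mathcal{L}^\circ_r(I_i,T_i,q) + \eta\,\bigtriangleup^{(1-q)}\right],
\]
with $\bigtriangleup = \sum_{j=1}^N \tan(p^\circ_{ij})$, and the analogous identity holds in the text-to-image direction. The crucial step is to substitute $q=1$: the normalization exponent becomes $1-q=0$, so $\bigtriangleup^{(1-q)} = \bigtriangleup^0 = 1$ irrespective of $f$. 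Hence the complementary normalizing term collapses to the constant $\eta$, completely decoupling it from the decision function.

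Summing the two retrieval directions then yields the affine relation
\[
R^\eta_{\mathcal{L}_r}(f) = \left(1-\tfrac{N\eta}{N-1}\right) R_{\mathcal{L}_r}(f) + 2\eta.
\]
Since the hypothesis $\eta \leq \frac{N-1}{N}$ forces the slope $1-\frac{N\eta}{N-1} \geq 0$ (strictly positive when $\eta < \frac{N-1}{N}$), the noisy risk is a non-decreasing affine image of the clean risk. Consequently, a function minimizes $R^\eta_{\mathcal{L}_r}$ if and only if it minimizes $R_{\mathcal{L}_r}$, so $f^*_\eta$ attains the clean optimum, which is exactly noise tolerance.

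I expect no genuine obstacle, since the heavy lifting was already done in proving~\Cref{peqbd}; the lemma can in fact be read off as a corollary. At $q=1$ one has $A_{\min}^{(1-q)} - A_{\max}^{(1-q)} = 1 - 1 = 0$, hence the constant $C=0$, and the established sandwich $C \leq R_{\mathcal{L}_r}(f^*) - R_{\mathcal{L}_r}(f^*_\eta) \leq 0$ pinches the difference to zero, giving $R_{\mathcal{L}_r}(f^*) = R_{\mathcal{L}_r}(f^*_\eta)$. The only point requiring a brief comment is the boundary case $\eta = \frac{N-1}{N}$, where the slope vanishes and $R^\eta_{\mathcal{L}_r}$ is constant; there the equivalence of minimizers degenerates, so I would state the substantive tolerance guarantee for $\eta < \frac{N-1}{N}$, where the correspondence between minimizers is genuine.
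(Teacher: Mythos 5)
Your proposal is correct and follows essentially the same route as the paper's own proof: both reduce, under uniform NC at $q=1$, to the affine identity $R^\eta_{\mathcal{L}_r}(f) = \bigl(1-\tfrac{N\eta}{N-1}\bigr)R_{\mathcal{L}_r}(f) + 2\eta$ (you obtain it by specializing the general-$q$ decomposition with $\bigtriangleup^{(1-q)}=1$, the paper re-derives it directly for $q=1$) and then conclude that the clean and noisy risk minimizers coincide. Your closing remark on the degenerate boundary case $\eta = \tfrac{N-1}{N}$, where the slope vanishes and the equivalence of minimizers breaks down, is a correct refinement that the paper glosses over.
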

\begin{proof}  Recall that for any $f$, 
$$
\begin{aligned}
    {R}_{\mathcal{L}_{r}}(f) = & {R}_{\mathcal{L}^\diamond_{r}}(f) + {R}_{\mathcal{L}^\circ_{r}}(f)\\
    =&\mathbb{E}_{(I_i,T_\cdot)\sim{\mathcal{D}}} \left[y_{i\cdot}\mathcal{L}^\circ_{r}(I_i,T_\cdot,q) \right] + \mathbb{E}_{(I_\cdot,T_i)\sim{\mathcal{D}}} \left[ y_{\cdot i} \mathcal {L}^\diamond_{r}(T_i,I_\cdot,q) \right].\\
=& \mathbb{E}_{(I_i,T_i)\sim{\mathcal{D}}} \left[\mathcal{L}^\circ_{r}(I_i,T_i,q) \right] + \mathbb{E}_{(I_i,T_i)\sim{\mathcal{D}}} \left[\mathcal {L}^\diamond_{r}(T_i,I_i,q) \right]
\end{aligned}
$$
Under uniform noisy correspondence with noise rate $\eta$ and $q=1$, for any $f$, ${R}^\eta_{\mathcal{L}^\circ_{r}}(f)$ is written as
\begin{equation}
     \begin{aligned}
      {R}^\eta_{\mathcal{L}^\circ_{r}}(f) =&  \mathbb{E}_{(I_i,T_\cdot)\sim{\mathcal{D}_\eta}}\left[\tilde{y}_{i\cdot}\mathcal{L}^\circ_{r}(I_i,T_\cdot,q=1) \right]\\
    =&\mathbb{E}_{(I_i,T_\cdot)\sim{\mathcal{D}}}[  
    (1-\eta) \mathcal{L}^\circ_{r}(I_i,T_i,q=1) + \frac{\eta}{N-1} \sum_{j\neq i} \mathcal{L}^\circ_{r}(I_i,T_j,q=1) ]\\
        =&\mathbb{E}_{(I_i,T_\cdot)\sim{\mathcal{D}}}\left[  
    (1-\eta) \mathcal{L}^\circ_{r}(I_i,T_i,q=1) + \frac{\eta}{N-1}  \left((N-2)+\frac{\tan{(p^\circ_{ii})}}{\sum^N_{k=1}\tan(p^\circ_{ik})} \right )\right]\\
                =&\mathbb{E}_{(I_i,T_\cdot)\sim{\mathcal{D}}}\left[  
    (1-\eta) \mathcal{L}^\circ_{r}(I_i,T_i,q=1) + \frac{\eta}{N-1}  \left((N-1) -  \mathcal{L}^\circ_{r}(I_i,T_i,q=1)\right )\right]\\
    =& (1-\frac{N\eta}{N-1}) {R}_{\mathcal{L}_r^\circ}(f) + \eta
 \end{aligned}
 \label{eq.9}-
\end{equation}
Note that the equation  between ${R}^\eta_{\mathcal{L}^\diamond_{r}}(f)$ and ${R}_{\mathcal{L}^\diamond_{r}}(f)$  can also be derived similarly as \Cref{eq.9}, \ie $R^\eta_{\mathcal{L}_r^\diamond}=(1-\frac{N\eta}{N-1}) {R}_{\mathcal{L}_r^\diamond}(f) + \eta$. Thus,
$$
{R}^\eta_{\mathcal{L}_{r}}(f) = (1-\frac{N\eta}{N-1}){R}_{\mathcal{L}_r}(f) + 2 \eta
$$
Now, for any $f$, $ {R}^\eta_{\mathcal{L}_{r}}(f^*) -  {R}^\eta_{\mathcal{L}_{r}}(f)=(1 - \frac{N\eta}{N-1})( {R}_{\mathcal{L}_{r}}(f^*) -  {R}_{\mathcal{L}_{r}}(f)) \leq 0$, where  $\eta \leq \frac{N-1}{N}$ and $f^*$ is a globalminimizer of $ {R}_{\mathcal{L}_{r}}(f)$. This proves  $f^*$ is also the global minimizer of ${R}^\eta_{\mathcal{L}_{r}}(f)$.
\end{proof}

\section{Implementation Details\label{a3}}
\subsection{Model Settings}

In this section, we mainly detail the model settings and the implementation of CRCL. To comprehensively verify the effectiveness of our framework, we apply our CRCL to VSE$\infty$~\cite{chen2021learning}, SAF~\cite{diao2021similarity}, and SGR~\cite{diao2021similarity} for further robustness against NC, \ie  CRCL-VSE$\infty$, CRCL-SAF, and  CRCL-SGR. For the VSE model used in our CRCL-VSE$\infty$, we use the same encoder models as VSE$\infty$~\cite{chen2021learning} to project the local region features and word embeddings into the shared common space and then utilize GPO~\cite{chen2021learning} to aggregate local representations into global representations,  wherein the dimensionality of the common space is 1024. For the  CRCL-SAF/SGR, like DECL~\cite{qin2022deep}, we directly perform our CRCL on the similarity output of these models without any changes to their models. In all experiments, we use the same image region features and text backbone for fairness. More specifically, we utilize a Faster R-CNN detection model~\cite{anderson2018bottom} to extract local-level BUTD features of salient regions with top-36 confidence scores for each image, like \cite{lee2018stacked,diao2021similarity}. These features are encoded into a 2,048-dimensional feature vector and then projected into 1,024-dimensional image representations in the common space. For each text, the Bi-GRU language backbone encodes the word tokens into the same dimensional semantic vector space as the image representation.
Following \cite{chen2021learning}, we employ the size augmentation on the training data, which is then fed into the model. For all parameter settings, see \Cref{secps}. The code of our CRCL will be released on GitHub.

\subsection{Parameter Settings \label{secps}}
In this section, we fully provide the parameter settings of our experiments in \Cref{tabp} for easy reproducibility on three benchmark datasets, \ie Flickr30K, MS-COCO, and CC152K. We divide the parameter settings into two groups, the first group includes the parameter settings for the training without synthetic noise (=0\%). The second group consists of the parameter settings for the training under synthetic noise ($>0\%$). Simultaneously, each group details the training parameters of the three extensions of the baselines, \ie CRCL-VSE$\infty$, CRCL-SAF, and CRCL-SGR. Note that the result of CRCL-SGRAF in the paper is the ensemble results of CRCL-SAF and CRCL-SGR. Following \cite{diao2021similarity,qin2022deep,yang2023bicro}, the ensemble strategy is averaging the similarities computed by the two models and then performing image-text matching. 
Next, we will describe these main parameters. $e_f$ represents the number of epochs to freeze the correspondence label, avoiding insufficient model training in the early stage from affecting the correction quality. $e_i$ in $[e_1,\cdots,e_m]$ is the number of training epochs for the $i$-th SR piece. During the last SR piece, CRCL decays the learning rate (lr\_rate) by 0.1 in lr\_update epochs. $\tau$ and $\lambda$ are the temperature parameter and the scale factor in ACL loss, respectively. $\beta$ and $\epsilon$ are the momentum coefficient and the similarity threshold in SCC, respectively. For the parametric analysis of some hyper-parameters, see \Cref{sec.pa} for more details.

\begin{table}[h]
    \caption{The settings of some key parameters for training on three datasets.  }
\centering
 \resizebox{\textwidth}{!}{
    \begin{tabular}{c|c|l|c|c|c|c|c|c|c|c}
    \toprule
    Noise & Datasets & Methods & $e_f$ & $[e_1,\cdots,e_m]$ & lr\_update & lr\_rate & $\beta$ & $\tau$ & $\lambda$&$\epsilon$ \\\midrule
    \multirow{9}{*}{ Synthetic noise $=0\%$} & \multirow{3}{*}{CC152K} & CRCL-VSE$\infty$ & 2 & [7,7,7,32] & 17 & 0.0005 & 0.8 & 0.05&5&0.1 \\
     &  & CRCL-SAF & 2 & [7,7,7,42] & 20 & 0.0005 & 0.8 & 0.05&5&0.1 \\
     &  & CRCL-SGR & 2 & [7,7,7,42] & 20 & 0.0005 & 0.8 & 0.05 &5&0.1\\\cmidrule{2-11}
     & \multirow{3}{*}{{Flickr30K}} & CRCL-{VSE$\infty$} & 2& {[7,7,7,32]} & {15} & {0.0005} & 0.8 & 0.05&5&0.1 \\
     &  & CRCL-SAF & 2 & [7,7,7,32] & 15 & 0.0005 & 0.8 & 0.05 &5&0.1\\
     &  & CRCL-SGR & 2 & [7,7,7,32] & 15 & 0.0005 & 0.8 & 0.05 &5&0.1\\\cmidrule{2-11}
     & \multirow{3}{*}{MS-COCO} & CRCL-VSE$\infty$ & 2 & [4,4,4,22] & 12 & 0.0005 & 0.8 & 0.05 &5&0.1\\
     &  & CRCL-SAF & 2 & [4,4,4,22] & 12 & 0.0005 & 0.8 & 0.05 &5&0.1\\
     &  & CRCL-SGR & 2 & [4,4,4,22] & 12 & 0.0005 & 0.8 & 0.05 &5&0.1\\\midrule
    \multirow{6}{*}{ Synthetic noise $>0\%$} & \multirow{3}{*}{Flickr30K} & CRCL-VSE$\infty$ & 2 & [7,7,7,32] & 15 & 0.0005 & 0.8 & 0.05 &5&0.1\\
     &  & CRCL-SAF & 2 & [7,7,7,32] & 15 & 0.0005 & 0.8 & 0.05 &5&0.1\\
     &  & CRCL-SGR & 2 & [7,7,7,32] & 15 & 0.0005 & 0.8 & 0.05 &5&0.1\\\cmidrule{2-11}
     & \multirow{3}{*}{MS-COCO} & CRCL-VSE$\infty$ &  2 & [4,4,4,22] & 12 & 0.0005 & 0.8 & 0.05&5&0.1 \\
     &  & CRCL-SAF &  2 & [4,4,4,22] & 12 & 0.0005 & 0.8 & 0.05 &5&0.1\\
     &  & CRCL-SGR & 2  & [4,4,4,22] & 12 & 0.0005 & 0.8 & 0.05 &5&0.1\\\bottomrule
    \end{tabular}}
    \label{tabp}
\end{table}

\section{Additional Experiments and Analysis\label{a4}}
\subsection{Parametric Analysis \label{sec.pa}}
The proposed CRCL has three sensitive key hyper-parameters, \ie the temperature parameter $\tau$, the momentum coefficient $\beta$, and the similarity threshold $\epsilon$. Thus, we conduct detailed parameter experiments (shown in \Cref{figv}) on the Flickr30K dataset to evaluate the impact of different hyper-parameter settings and obtain better parameter settings for CRCL. Note that all parametric experiments are performed by CRCL-VSE$\infty$ under 60\% noise. As can be seen from \Cref{fig1a},  too large or too small $\tau$ both cause a performance drop. Thus, in all experiments, we recommend the range of $\tau$  is  0.03 $\sim$ 0.07. From \Cref{fig1b}, when the value of $\beta$ is set to two extreme values, \ie 0 and 1, the performance drops remarkably. Moreover, in the range of $(0,1)$, as $\beta$ increases, the performance gradually improves. We think that with the increase of $\beta$, each correction performed by MC will retain more historical information to reduce perturbation. Thus providing more stable corrected correspondences for training. In all our experiments, $\beta$ is 0.8. From \Cref{fig1c}, we can see that proper filtering is beneficial for mitigating NC. We think this filtering strategy can prevent the active loss from exploiting these confident noisy pairs to produce more misleading gradients. Thus, we set $\epsilon$ as 0.1 in all our experiments.

\begin{figure} [h]

\centering
\subfloat[][The temperature parameter $\tau$]{
\includegraphics[width=0.32\linewidth]{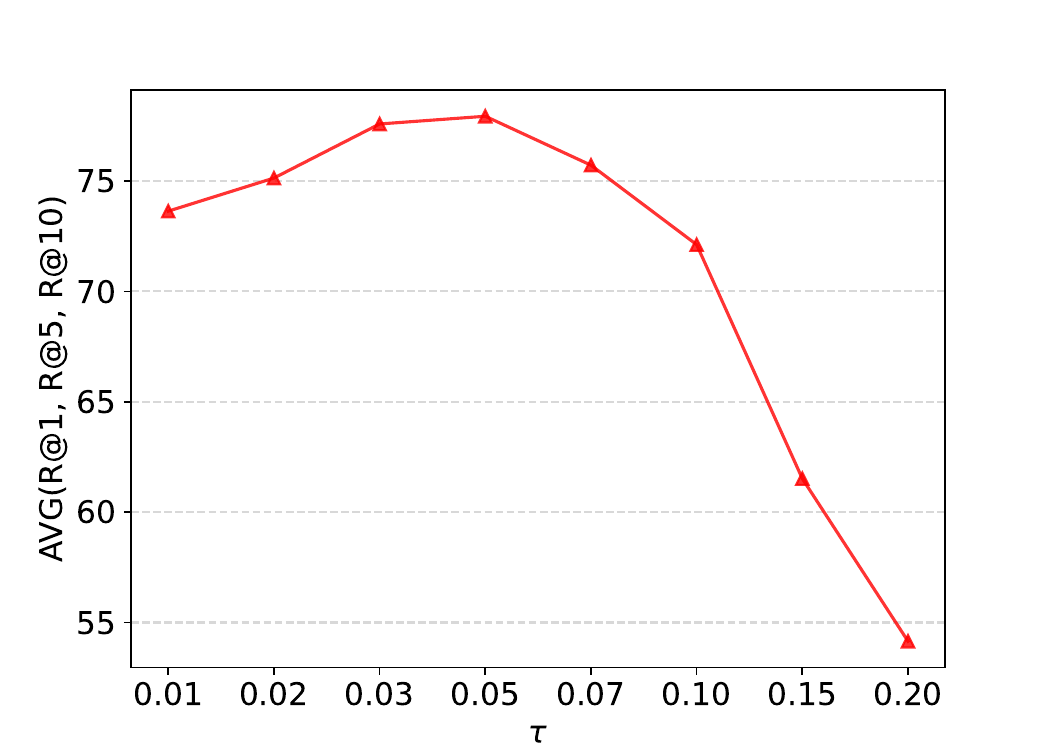}	
\label{fig1a}}
\subfloat[][The momentum coefficient $\beta$]{
\includegraphics[width=0.32\linewidth]{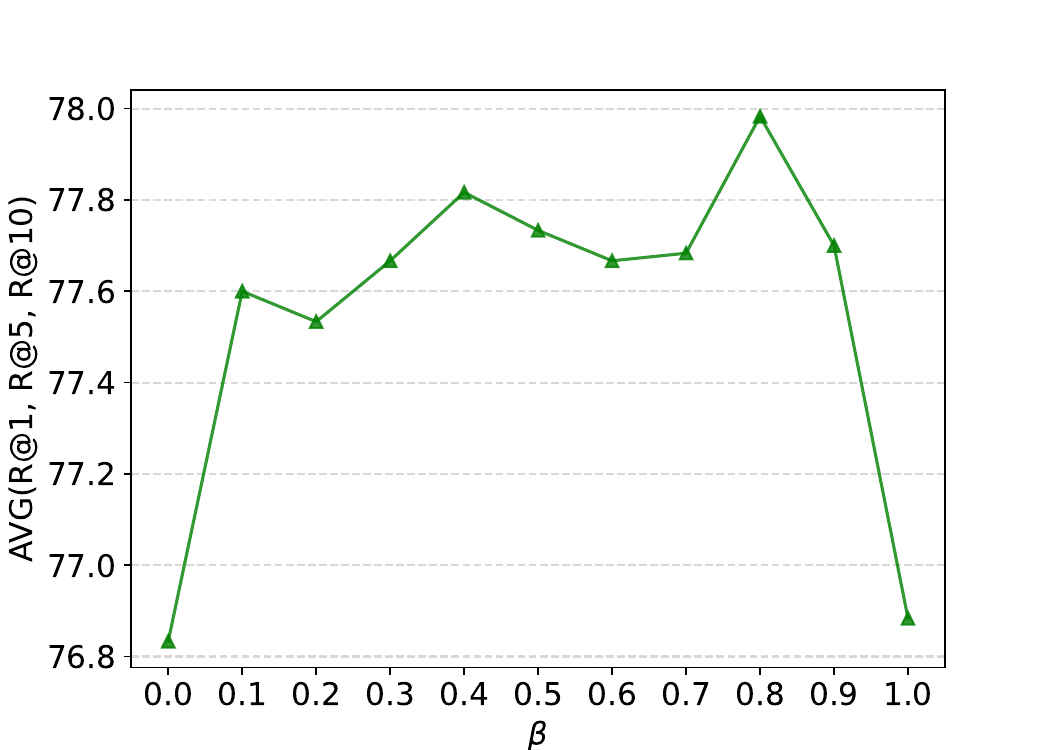}	
\label{fig1b}}
\subfloat[][The similarity threshold $\epsilon$]{
\includegraphics[width=0.32\linewidth]{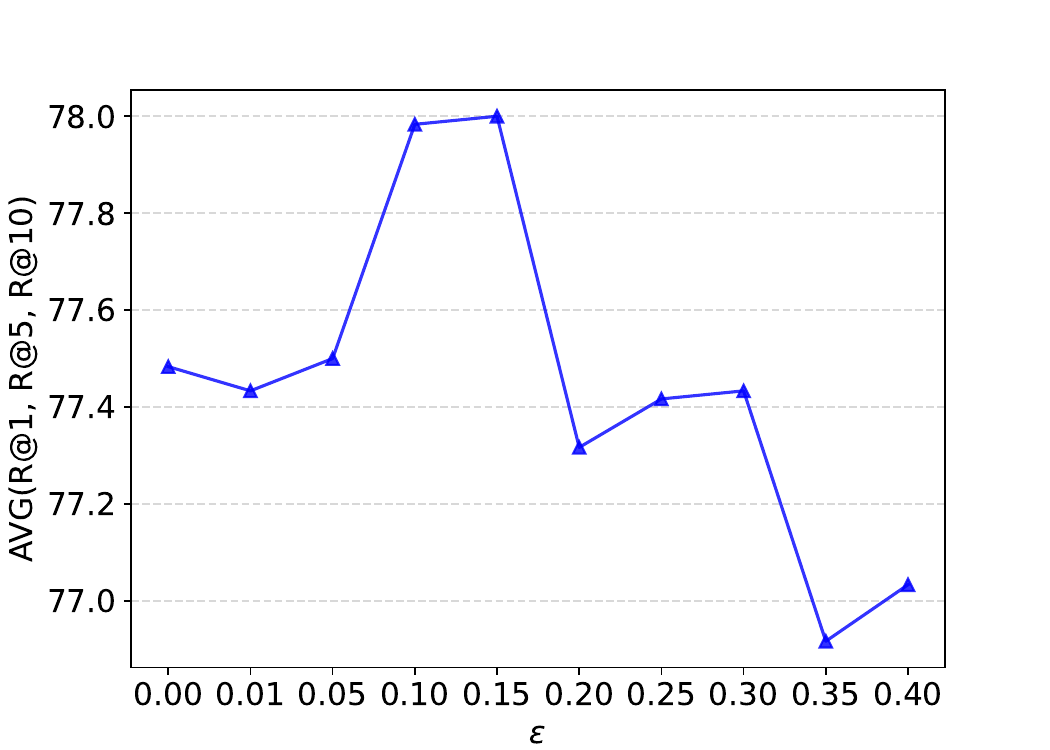}	
\label{fig1c}}
\caption{Parametric analysis on Flickr30K with 60\% noise.}
\label{figv}
\end{figure}

\subsection{Progressive Analysis\label{sec.pa}}
To comprehensively investigate the effectiveness of our CRCL, we carry out some progressive processes to further analyze the advantages of CRCL. Specifically, we recorded the performance of VSE$\infty$ with different loss functions, including CRCL-VSE$\infty$, $\mathcal{L}_d$, $\mathcal{L}_r(q=1)$, $\mathcal{L}_r(q=0)$, Complementary Contrastive Loss (CCL)~\cite{hu2023cross}, the hinge-based Triplet Ranking loss (TR)~\cite{kiros2014unifying}, the Triplet Tanking loss with Hard Negatives (TR-HN)~\cite{faghri2017vse++}, on Flickr30K under 80\% noise. We visualize the performance of bidirectional retrieval in~\Cref{fig2}. From the results, although $\mathcal{L}_r(q=1)$ is noise-tolerant, which is consistent with the theoretical analysis (\cref{pth1}), there would be some underfitting. Our CRCL with ACL loss fully explores the advantages of $\mathcal{L}_r$ and $\mathcal{L}_q$, showing remarkable robustness.

\begin{figure} [h]
\centering
\subfloat[][Image-to-Text]{
\includegraphics[width=0.48\linewidth]{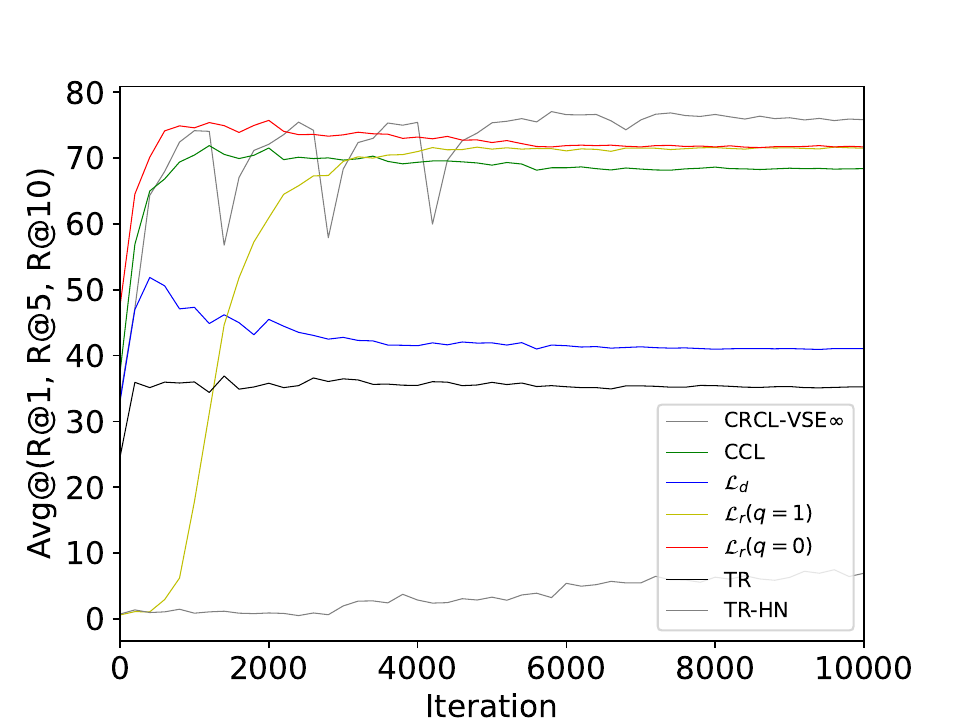}	
\label{fig2a}}
\subfloat[][Text-to-Image]{
\includegraphics[width=0.48\linewidth]{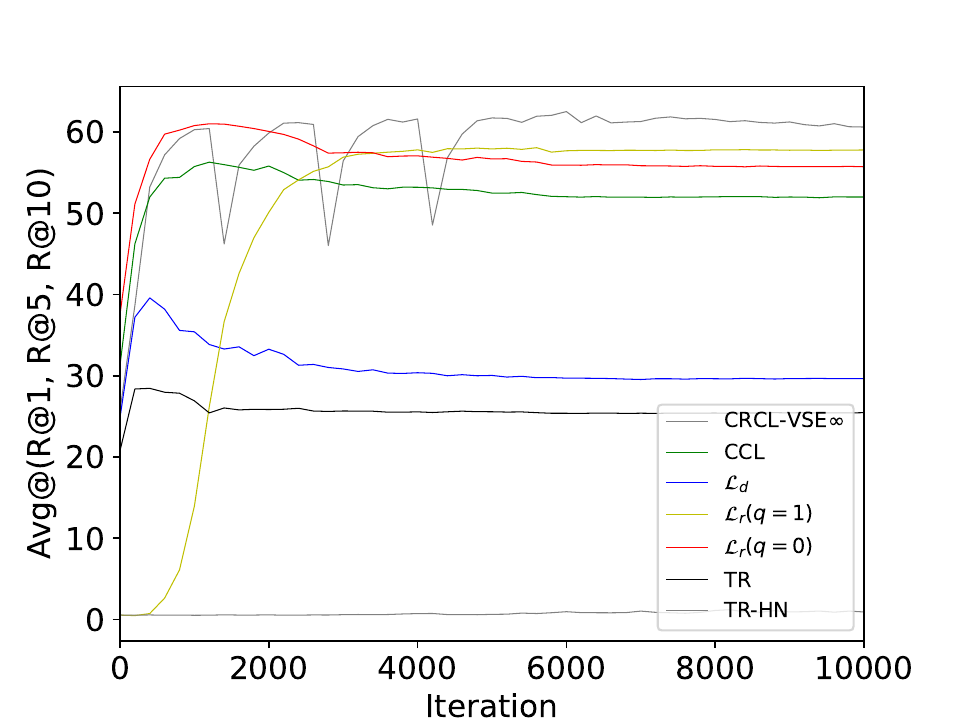}	
\label{fig2b}}
\caption{The performance of VSE$\infty$ with different loss functions.}
\label{fig2}
\end{figure}

\begin{table}[h]
\caption{Performance comparison (R@K(\%) and rSum) of image-text retrieval on Flickr30K and MS-COCO 1K. The highest scores are shown in \textbf{bold}.}
\label{tb1}
\Large
\centering
\setlength{\abovedisplayskip}{5pt}
\resizebox{\textwidth}{!}{
\begin{tabular}{c|l|ccc|ccc|c|ccc|ccc|c}
\toprule
\multicolumn{2}{c|}{}&\multicolumn{7}{c|}{\centering Flickr30K}&
\multicolumn{7}{c}{\centering MS-COCO 1K}\\
\multicolumn{2}{c|}{}&\multicolumn{3}{c|}{\centering Image $\rightarrow$ Text}&
\multicolumn{3}{c|}{\centering Text $\rightarrow$ Image}&&
\multicolumn{3}{c|}{\centering Image $\rightarrow$ Text}&
\multicolumn{3}{c|}{\centering Text $\rightarrow$ Image}&
\\\midrule
Noise                &        Methods      & R@1  & R@5  & R@10 & R@1  & R@5  & R@10 & rSum  & R@1  & R@5  & R@10 & R@1  & R@5  & R@10 & rSum  \\ \midrule
\multirow{10}{*}{20\%} 
& SAF& 51.8&79.5&88.3&38.1&66.8&76.6&401.1&41.0&78.4&89.4&38.2&74.0&85.5&406.5\\
& SGR& 61.2&84.3&91.5&44.5&72.1&80.2&433.8&49.1&83.8&92.7&42.5&77.7&88.2&434.0\\
& VSE$\infty$& 69.0 & 89.2&  94.8&  48.8 & 76.3 & 83.8&461.9&73.5&93.3& 97.0&  57.4 &86.5 &92.8&500.5 \\
&DECL-SAF&73.1&93.0&96.2&57.0&82.0&88.4&489.7&77.2&\textbf{95.9}&98.4&61.6&89.0&95.3&517.4\\
&DECL-SGR&75.4&93.2&96.2&56.8&81.7&88.4&491.7&76.9&95.3&98.2&61.3&89.0&95.1&515.8\\
&BiCro-SAF&\textbf{77.0}&93.3&97.5&57.2&82.3&89.1&496.4&74.5&95.0&98.2&60.7&89.0&95.0 &512.4\\
&BiCro-SGR&76.5&93.1&97.4&58.1&82.4&88.5&496.0&75.7&95.1&98.1&60.5&88.6&94.7&512.7\\
&\textbf{CRCL-VSE$\infty$}& 74.8& 92.8&96.5&55.1&81.8&88.7& 489.7&76.2&95.5&\textbf{98.6}&61.3&89.7&95.6 &516.9 \\
&\textbf{CRCL-SAF}&74.7&93.7& 97.7&57.9& 82.8& 89.2&496.0&78.5 &95.7 &98.5 &63.1 &89.9 &95.5&521.2 \\
&\textbf{CRCL-SGR}&75.8& \textbf{94.6}& \textbf{97.6}& \textbf{59.1}& \textbf{84.0}& \textbf{90.1}&\textbf{501.2}&\textbf{78.9} &{95.7} &98.3 &\textbf{63.6} &\textbf{90.3}& \textbf{95.7}&\textbf{522.5}\\\midrule
\multirow{10}{*}{40\%} 
& SAF& 34.3& 65.6& 78.4& 30.1& 58.0& 68.5& 334.9&36.0&74.4&87.0&33.7&69.4&82.5&383.0\\
& SGR& 47.2& 76.4& 83.2& 34.5& 60.3& 70.5& 372.1&43.9&78.3&89.3&37.0&72.8&85.1&406.4\\
& VSE$\infty$& 30.2 &58.3 &70.2 &22.3 &49.6 &62.7&293.3&53.3& 84.3 &92.1 &31.4 &63.8& 75.0 &399.9\\
&DECL-SAF&72.2&91.4&95.6&54.0&79.4&86.4&479.0&75.8& 95.0 &98.1 &60.3& 88.7 &94.9&512.8\\
&DECL-SGR&72.4&92.2&96.5&54.5&80.1&87.1&482.8&75.9&95.3&98.2&60.2&88.3&94.8&512.7\\
&BiCro-SAF&72.5&91.7&95.3&53.6&79.0&86.4&478.5&75.2&95.0&97.9&59.4&87.9&94.3&509.7\\
&BiCro-SGR&72.8&91.5&94.6&54.7&79.0&86.3&478.9&74.6&94.8&97.7&59.4&87.5&94.0&508.0\\
&\textbf{CRCL-VSE$\infty$}&71.2&92.6&96.3&53.2&80.4&87.4&481.1&74.4&95.1&\textbf{98.4}&59.5&89.1&95.2&511.7 \\
&\textbf{CRCL-SAF}&74.2& 93.8& 97.1& 57.0& 81.8& 88.6&492.5&76.4&\textbf{95.7}&98.1&\textbf{62.1}&89.3&95.3&516.9 \\
&\textbf{CRCL-SGR}&\textbf{75.5}& \textbf{94.0}& \textbf{97.8}& \textbf{57.5}& \textbf{82.6}& \textbf{89.2}&\textbf{496.6}&\textbf{76.8}&95.3&98.2&61.9&\textbf{89.6}&\textbf{95.4}&\textbf{517.2}\\
\midrule
\multirow{10}{*}{60\%} 
& SAF& 28.3& 54.5 &67.5 &22.1 &47.3 &59.0 &278.7&28.2&63.9&79.4&31.1&65.6&80.5&348.7\\
& SGR& 28.7& 58.0 &71.0 &23.8 &49.5 &60.7 &291.7&37.6&73.3&86.3&33.8&68.6&81.7&381.3\\
& VSE$\infty$& 18.0& 44.0& 55.7&15.1&38.5&51.8&223.1 &33.4&64.8&79.1 &26.0 &60.1 &76.3&339.7\\
&DECL-SAF&66.4& 88.1&93.6&49.8&76.1& 84.4&458.4&71.1&93.6&97.3&57.9&86.8&93.8&500.5 \\
&DECL-SGR&68.5&89.9& 94.8&50.3&76.7&84.1&464.3&73.2&94.4&97.9&58.2&86.8&93.9&504.4\\
&BiCro-SAF&67.1&88.3&93.8&48.8&75.2&83.8&457.0&72.5& 94.3& 97.9 &57.7& 86.9 &93.8 &503.1\\
&BiCro-SGR&68.5&89.1&93.1&48.2&74.7&82.7&456.3&73.4& 94.0 &97.5& 58.0& 86.8& 93.6& 503.3\\
&\textbf{CRCL-VSE$\infty$}&68.3& 89.8&\textbf{95.9}& 50.5& 77.8& 85.3&467.6&72.6&94.1&\textbf{98.0}&57.8&87.7&94.5&504.7  \\
&\textbf{CRCL-SAF}&70.1& 90.8& 95.7& \textbf{53.0}& 79.4& \textbf{86.9}&475.9&74.6 &94.5 &97.6 &\textbf{59.5} &\textbf{88.3} &\textbf{94.7}&\textbf{509.2}\\
&\textbf{CRCL-SGR}&\textbf{70.5}&\textbf{ 91.3}& 95.6& 52.5& \textbf{79.4}& 86.8&\textbf{476.1}&\textbf{74.6} &\textbf{94.6}& 97.9 &59.2 &88.0& 94.6&508.9\\\midrule
\multirow{10}{*}{80\%} 
& SAF& 12.2&32.8&48.4&11.8&30.5&41.5&177.2&24.2& 57.5 &74.1& 24.7& 57.1 &73.0 &310.6\\
& SGR& 13.7&35.1&47.6&12.1&30.9&41.9&181.3&26.7& 60.7 &75.6& 25.3 &58.2 &72.6 &319.1\\
& VSE$\infty$& 8.1& 23.1& 34.7& 7.4& 22.6& 31.8& 127.7&25.4&55.1&70.6&19.2&50.5&68.0&288.8 \\
&DECL-SAF&56.3& 82.1& 89.3& 38.7& 64.7& 73.8&404.9&65.9&92.0 &96.6& 52.9& 83.6& 91.7&482.7\\
&DECL-SGR&55.1& 79.8& 87.2& 37.4& 63.4& 72.9&395.8&65.6&91.6 &96.6& 52.0& 83.0& 91.3&480.1\\
&BiCro-SAF&2.4&9.1&15.8&2.4&8.3&13.7&51.7&39.6&72.6&84.7&22.4&52.8&67.1&368.9 \\
&BiCro-SGR&1.7&8.7&13.7&1.3&5.1&8.9&39.4&31.4&62.0&75.2&30.0&60.7&73.2&332.5\\
&\textbf{CRCL-VSE$\infty$}&55.3&82.1&89.1&39.7&68.2&77.8&412.2&67.9&92.8&\textbf{97.1}&53.1&84.7&92.5&488.1\\
&\textbf{CRCL-SAF}&58.4& 83.9& 90.5& \textbf{44.1}& 70.7& 79.8&427.4&\textbf{70.9}&92.8&\textbf{97.1}&55.2&85.3&92.9&494.2  \\
&\textbf{CRCL-SGR}&\textbf{59.2}&\textbf{ 85.1}& \textbf{91.1}& 43.6& \textbf{70.9}& \textbf{80.1}&\textbf{430.0}&70.7&\textbf{92.9}&\textbf{97.1}&\textbf{56.0}&\textbf{85.6}&\textbf{93.1}&\textbf{495.4} \\
\bottomrule
\end{tabular}}
\end{table}

\begin{table}[h]
\caption{Performance comparison (R@K(\%) and rSum) of image-text retrieval on Flickr30K and MS-COCO 1K. The highest scores are shown in \textbf{bold}. * means global-level method.}
\label{tb2}
\Large
\centering
\setlength{\abovedisplayskip}{5pt}
\resizebox{\textwidth}{!}{
\begin{tabular}{l|ccc|ccc|c|ccc|ccc|c}
\toprule
&\multicolumn{7}{c|}{\centering Flickr30K}&
\multicolumn{7}{c}{\centering MS-COCO 1K}\\
&\multicolumn{3}{c|}{\centering Image $\rightarrow$ Text}&
\multicolumn{3}{c|}{\centering Text $\rightarrow$ Image}&&
\multicolumn{3}{c|}{\centering Image $\rightarrow$ Text}&
\multicolumn{3}{c|}{\centering Text $\rightarrow$ Image}&
\\\midrule
Methods      & R@1  & R@5  & R@10 & R@1  & R@5 &  R@10 & rSum  & R@1  & R@5  & R@10 & R@1  & R@5  & R@10 & rSum  \\ \midrule
 VSRN*&71.3 & 90.6 & 96.0 & 54.7 & 81.8 & 88.2 & 482.6& 76.2& 94.8 &98.2& 62.8 &89.7 &95.1& 516.8\\
 CVSE*&70.5& 88.0 &92.7& 54.7 &82.2& 88.6& 476.7&69.2& 93.3 &97.5& 55.7 &86.9& 93.8& 496.4\\
 VSE$\infty$*&76.5 & 94.2 & 97.7 & 56.4 & 83.4 & 89.9 & 498.1& 78.5& 96.0& 98.7&  61.7&  90.3&  95.6 & 520.8\\
 MV-VSE*&{79.0}&{94.9}&{97.7}&{59.1}&{84.6}&{90.6}&{505.9}&{78.7} &{95.7} &{98.7} &62.7& 90.4& {95.7}& {521.9}\\
 SCAN&67.4 & 90.3 & 95.8 & 48.6 & 77.7 & 85.2 & 465.0& 72.7&  94.8&  98.4&  58.8 & 88.4 &94.8&  507.9\\
 CAMP&68.1 & 89.7 & 95.2 & 51.5 & 77.1 & 85.3 & 466.9& 72.3 & 94.8 & 98.3 & 58.5&  87.9&  95.0 & 506.8\\
 IMRAM& 74.1 & 93.0 & 96.6 & 53.9 & 79.4 & 87.2 & 484.2& 76.7&  95.6&  98.5 & 61.7&  89.1&  95.0 & 516.6\\
 GSMN&76.4 & 94.3 & 97.3 & 57.4 & 82.3 & 89.0 & 496.7& 78.4&  96.4&  98.6 & 63.3&  90.1&  95.7 & 522.5\\
 SGRAF&77.8 & 94.1 & 97.4 & 58.5 & 83.0 & 88.8 & 499.6& 79.6&  96.2 & 98.5&  63.2&  90.7&  96.1 & 524.3 \\
 NCR&77.3 & 94.0 & 97.5 & 59.6 & 84.4 & 89.9 & 502.7& 78.7&  95.8 & 98.5 & 63.3&  90.4&  95.8&  522.5\\
 DECL&79.8&  94.9&  97.4 & 59.5 & 83.9 & 89.5 & 505.0& 79.1&  96.3 & 98.7&  63.3&  90.1 & 95.6&  523.1\\
 CGMN& 77.9 & 93.8 & 96.8 & 59.9 & 85.1 & {90.6} & 504.1& 76.8&  95.4 & 98.3&  63.8 & 90.7&  95.7&  520.7\\
 UARDA&77.8 & 95.0 & 97.6 & 57.8 & 82.9 & 89.2 & 500.3& 77.8 & 95.0 & 97.6 & 57.8 & 82.9 & 89.2 & 500.3\\
 CMCAN&79.5 &{95.6} &97.6 &60.9 &84.3 &89.9 &507.8&78.6&  96.5& \textbf{98.9}&  63.9 & 90.7 & {96.2} & 524.8\\
 NAAF&78.3&94.1&{97.7}&58.9&83.3&89.0&501.3&78.9&96.0& 	98.7& 	63.1& \textbf{91.4}& \textbf{96.5}&524.6\\
 CCR\&CCS&79.3&95.2 &98.0 &59.8 &83.6 &88.8 &504.7&80.2&\textbf{96.8}& 98.7&{64.3}&  90.6&  95.8&  526.4\\
 RCL&79.9 &\textbf{96.1} &{97.8 }&61.1 &{85.4 }&90.3&510.6&80.4&96.4&{98.7}&64.3&90.8&96.0&526.6\\
 BiCro&\textbf{80.7 }&94.3& 97.6 &59.8 &83.8& 89.7 &505.9& 78.3& 95.8 &98.5& 62.7& 90.0& 95.7& 521.0\\\midrule
 \textbf{CRCL}&78.5&95.5&\textbf{98.0}&\textbf{62.3}&\textbf{86.5}&\textbf{91.7}&\textbf{512.5}&\textbf{80.7}&{96.5}&98.6&\textbf{65.1}&91.2&96.1&\textbf{528.2}
 \\\bottomrule
\end{tabular}}
\end{table}

\subsection{More Results under Synthetic Noisy Correspondences}
To fully demonstrate the superiority and generalization of the proposed CRCL, we provide more comparison results under different robustness frameworks, including DECL\footnote{\url{https://github.com/QinYang79/DECL}}~\cite{qin2022deep} and BiCro\footnote{\url{https://github.com/xu5zhao/BiCro}}~\cite{hu2023cross}. In~\Cref{tb1}, except for the results of BiCro under 20\%, 40\%, and 60\%, all other results are reproduced by us. From the results, our CRCL can significantly improve the robustness of existing methods (\eg VSE$\infty$, SAF, and SGR) and outperform other advanced robust frameworks. It is worth noting that CRCL is also stable and superior in high noise, which shows the effectiveness of our CRCL.

\subsection{More Results under Well-annotated Correspondences}
In this section, we supplement the experimental results under well-annotated correspondences for a comprehensive and faithful comparison, including 17 state-of-the-art baselines, namely VSRN (ICCV'19)~\cite{li2019visual}, CVSE (ECCV'20)~\cite{wang2020consensus}, VSE$\infty$ (CVPR'21)~\cite{chen2021learning},  MV-VSE (IJCAI'22)~\cite{li2022multi}; SCAN (ECCV'18)~\cite{lee2018stacked}, CAMP (ICCV'19)~\cite{wang2019camp}, IMRAM (CVPR'20)~\cite{chen2020imram}, GSMN (CVPR'20)~\cite{liu2020graph}, SGRAF (AAAI'21)~\cite{diao2021similarity}, NCR (NeurIPS'21)~\cite{huang2021learning}, DECL (ACM MM'22)~\cite{qin2022deep}, CGMN (TOMM'22)~\cite{cheng2022cross}, URDA (TMM'22)~\cite{li2022image}, CMCAN (AAAI'22)~\cite{zhang2022show}, NAAF (CVPR'22)~\cite{zhang2022negative}, CCR\&CCS (WACV'23)~\cite{chen2023more},  RCL (TPAMI'23)~\cite{hu2023cross}, and BiCro (CVPR'23) ~\cite{yang2023bicro}. From the experimental results in~\Cref{tb2}, our CRCL achieves competitive results, which demonstrates the ability and potential of CRCL to handle well-correspondence scenarios.

\section{Related works\label{a2}}
\subsection{Image-Text Matching}
Image-text matching methods mainly focus on learning latent visual-semantic relevance/similarities as the evidence for cross-modal retrieval \cite{faghri2017vse++,lee2018stacked,diao2021similarity,chen2021learning,zhang2022show,liu2022regularizing,huang2022mack,goel2022cyclip,pan2023fine,fu2023learning}. These approaches could be roughly classified into global- and local-level methods. To be specific, most global-level methods~\cite{faghri2017vse++,li2019visual,chen2021learning,fu2023learning} project images and texts into a shared global space, wherein cross-modal similarities could be computed \cite{faghri2017vse++,chen2021learning}. For example, Faghri et al.~\cite{faghri2017vse++} proposed a triplet ranking loss with hard negatives to learn holistic visual-semantic embeddings for cross-modal retrieval. A Generalized Pooling Operator (GPO)~\cite{chen2021learning} was proposed to adaptively aggregate different features (\eg region-based and grid-based ones) for better common representations. For the local-level methods, most of them desire to learn the latent fine-grained alignments across modalities for more accurate inference of visual-semantic relevance~\cite{lee2018stacked,diao2021similarity,zhang2022show}. Representatively, Lee et al.\cite{lee2018stacked} proposed a Stacked Cross Attention Network model (SCAN) to excavate the full latent alignments by contextualizing the image regions and word tokens for visual-semantic similarity inference. Diao et al.~\cite{diao2021similarity} proposed a Similarity Graph Reasoning and Attention Filtration model (SGRAF) for accurate cross-modal similarity inference by using a graph convolutional neural network for fine-grained alignments and an attention mechanism for representative alignments. Moreover, Zhang et al.~\cite{zhang2022show}  proposed a novel Cross-Modal Confidence-Aware Network to combine the confidence of matched region-word pairs with local semantic similarities for a more accurate visual-semantic relevance measurement. HREM~\cite{fu2023learning} could explicitly capture both fragment-level relations within modality and instance-level relations across different modalities, leading to better retrieval performance. Pan et.al~\cite{pan2023fine} propose a Cross-modal Hard Aligning Network (CHAN) to comprehensively exploit the most relevant region-word pairs and eliminate all other alignments, achieving better retrieval accuracy and efficiency. 
However, the aforementioned methods rely heavily on well-aligned image-text pairs while ignoring the inevitable noisy correspondences in data \cite{huang2021learning,qin2022deep}, which will mislead the cross-modal learning and lead to performance corruption.

\subsection{Learning with Noisy Labels}

Since the lack of well-annotated data in many real-world applications~\cite{ghosh2017robust,qin2022nim,Feng_2023_CVPR,lin2023graph,hu2021learning,qin2022maximum,qin2021semi,huang2021learning}, learning with incomplete/noisy supervision information is becoming more and more popular in recent years. In this section, we briefly review a few families of these methods against noisy labels: \textbf{1) Robust losses} aims to improve the robustness of loss functions to prevent models from overfitting on noisy labels~ \cite{ghosh2017robust,zhang2018generalized,xu2019l_dmi,wang2019symmetric,ma2020normalized,Feng_2023_CVPR}. \textbf{2) Sample selection}~\cite{li2020dividemix,lu2022ensemble} mainly exploits the memorization effect of DNNs \cite{arpit2017closer} to divide/select the corrupted samples from datasets, and then conduct different training strategies for clean and noisy data. \textbf{3) Correction Approaches}\cite{patrini2017making,tanaka2018joint,lu2022selc} attempt to correct the wrong supervision information (\eg labels or losses) for robust training through some ingenious mechanisms. Different from the aforementioned unimodal category-based methods, learning with noisy correspondence focuses on the noisy annotations existing across different modalities instead of classes \cite{huang2021learning,qin2022deep}. That is to say, noisy correspondences are instance-level noise instead of class-level noise, which is more challenging \cite{huang2021learning,qin2022deep}. To tackle this challenge, Huang et al.~\cite{huang2021learning} first proposed a novel Noisy Correspondence Rectifier (NCR) to rectify the noisy correspondences with co-teaching. By introducing evidential deep learning into image-text matching, Qin et al.~\cite{qin2022deep} proposed a general Deep Evidential Cross-modal Learning framework (DECL) to improve the robustness against noisy correspondences.  Some recent works~\cite{hu2023cross,yang2023bicro} try to predict correspondence labels to recast the margin of triplet ranking loss~\cite{faghri2017vse++} as a soft margin to further improve robustness like NCR, \eg cross-modal mete learning~\cite{hu2023cross} and similarity-based consistency learning~\cite{yang2023bicro}. In addition to image-text matching, other fields are also troubled by NC, such as partially view-aligned clustering~\cite{yang2022robust,yang2021partially,wen2023deep,yang2020adversarial}, video-text retrieval~\cite{zhang2023robust}, visible-infrared person re-identification~\cite{yang2022learning}. In this paper, we mainly focus on the NC problem in image-text matching and try to address this from both robust loss function and correspondence correction.
\end{document}